\documentclass{article}

\usepackage{microtype}
\usepackage{graphicx}
\usepackage{subcaption}
\usepackage{booktabs} % for professional tables

\usepackage{hyperref}

\usepackage[accepted]{icml2026}

\usepackage{amsmath}
\usepackage{amssymb}
\usepackage{mathtools}
\usepackage{amsthm}
\usepackage{wrapfig}
\usepackage{hyperref}
\usepackage{url}
\usepackage{booktabs}
\usepackage[table]{xcolor}
\usepackage{subcaption}
\usepackage{multirow}
\definecolor{highlight}{RGB}{235, 240, 255}
\usepackage{mathrsfs}

\usepackage[capitalize,noabbrev]{cleveref}

\theoremstyle{plain}
\newtheorem{theorem}{Theorem}[section]

\newtheorem{lemma}[theorem]{Lemma}
\newtheorem{corollary}[theorem]{Corollary}
\theoremstyle{definition}

\newtheorem{assumption}[theorem]{Assumption}
\theoremstyle{remark}
\newtheorem{remark}[theorem]{Remark}

\usepackage[textsize=tiny]{todonotes}

\newcommand{\haoran}[1]{{\color{blue}#1}}
\colorlet{blue}{black}

\icmltitlerunning{FedRot-LoRA: Mitigating Rotational Misalignment in Federated LoRA}

\begin{document}

\twocolumn[
  \icmltitle{FedRot-LoRA: Mitigating Rotational Misalignment in Federated LoRA}

  % It is OKAY to include author information, even for blind submissions: the
  % style file will automatically remove it for you unless you've provided
  % the [accepted] option to the icml2026 package.

  % List of affiliations: The first argument should be a (short) identifier you
  % will use later to specify author affiliations Academic affiliations
  % should list Department, University, City, Region, Country Industry
  % affiliations should list Company, City, Region, Country

  % You can specify symbols, otherwise they are numbered in order. Ideally, you
  % should not use this facility. Affiliations will be numbered in order of
  % appearance and this is the preferred way.
  \icmlsetsymbol{equal}{*}

  \begin{icmlauthorlist}
    \icmlauthor{Haoran Zhang}{ut}
    \icmlauthor{Dongjun Kim}{ut}
    \icmlauthor{Seohyeon Cha}{ut}
    \icmlauthor{Haris Vikalo}{ut}
  \end{icmlauthorlist}

  \icmlaffiliation{ut}{Chandra Department of Electrical and Computer Engineering, The University of Texas at Austin, TX, USA}

  \icmlcorrespondingauthor{Haoran Zhang}{haoranz@austin.utexas.edu}

  % You may provide any keywords that you find helpful for describing your
  % paper; these are used to populate the "keywords" metadata in the PDF but
  % will not be shown in the document
  \icmlkeywords{Machine Learning, Federated Learning, LoRA}

  \vskip 0.3in
]

% this must go after the closing bracket ] following \twocolumn[ ...

% This command actually creates the footnote in the first column listing the
% affiliations and the copyright notice. The command takes one argument, which
% is text to display at the start of the footnote. The \icmlEqualContribution
% command is standard text for equal contribution. Remove it (just {}) if you
% do not need this facility.

% Use ONE of the following lines. DO NOT remove the command.
% If you have no special notice, KEEP empty braces:
\printAffiliationsAndNotice{}  % no special notice (required even if empty)
% Or, if applicable, use the standard equal contribution text:
% \printAffiliationsAndNotice{\icmlEqualContribution}

\begin{abstract}
Federated LoRA provides a communication-efficient mechanism for fine-tuning large language models on decentralized data. In practice, however, a discrepancy between the factor-wise averaging used to preserve low rank and the mathematically correct aggregation of local updates can cause significant aggregation error and unstable training. We argue that a major source of this problem is \emph{rotational misalignment}, arising from the rotational invariance of low-rank factorizations -- semantically equivalent updates can be represented in different latent subspaces across clients since $(B_i R_i)(R_i^\top A_i) = B_i A_i$. When such misaligned factors are averaged directly, they interfere destructively and degrade the global update. To address this issue, we propose \textbf{FedRot-LoRA}, a federated LoRA framework that aligns client updates via orthogonal transformations prior to aggregation. This alignment preserves the semantic update while reducing cross-client subspace mismatch, without increasing communication cost or restricting model expressivity. We provide a convergence analysis that examines the aggregation error induced by factor-wise averaging and shows how rotational alignment yields a tighter upper bound on this error. Extensive experiments on natural language understanding and generative tasks demonstrate that FedRot-LoRA consistently outperforms existing federated LoRA baselines across a range of heterogeneity levels and LoRA ranks. 
The code is available at \url{https://github.com/haoran-zh/FedRot-LoRA}.
\end{abstract}

\section{Introduction}\label{sec:intro}
The remarkable performance of large language models (LLMs) is driven by training and adaptation on large-scale, diverse datasets \cite{kaplan2020scaling}. 
In many real-world applications, however, high-quality data is distributed across devices or institutions and cannot be centralized due to privacy or regulatory constraints \cite{rieke2020future}. Federated learning (FL) \cite{mcmahan2017communication} offers a framework to leverage such decentralized data by coordinating training across clients while keeping raw data local.

In practice, federated fine-tuning faces major practical barriers \cite{zhang2023fedpetuning}. Full-model training is often computationally infeasible on resource-constrained clients \cite{xu2024fwdllm}, and exchanging full model updates across training rounds introduces severe communication overhead.
Parameter-efficient fine-tuning (PEFT) has emerged as an effective way to reduce computation and communication cost by updating only a small subset of parameters while keeping the backbone frozen \cite{lialin2023scaling,han2024parameter}.
Among PEFT methods, Low-Rank Adaptation (LoRA) \cite{hu2022lora} parameterizes weight updates via low-rank factors ($\Delta W = BA$), making it particularly attractive in federated settings. Unlike prompt-based or adapter-based approaches, LoRA operates directly in the model weight space and incurs no additional inference-time overhead once the low-rank updates are merged, while remaining expressive under a low-rank assumption. These properties make LoRA well-suited for FL and have motivated a growing body of work on \textit{federated LoRA}, which aims to combine FL’s privacy guarantees and broad data coverage with LoRA’s efficiency for practical decentralized LLM fine-tuning \cite{babakniya2023slora,kuang2024federatedscope}.

\textbf{Challenge of Federated LoRA.}
Integrating LoRA into FL introduces a fundamental aggregation challenge of balancing communication cost against mathematical correctness. Consider an FL system with $N$ clients, where each client $i$ in round $t$ computes a local update $\Delta W_i^t = B_i^ tA_i^t$, with $B_i^t \in \mathbb{R}^{d\times r}, A_i^t \in \mathbb{R}^{r \times d}$, and $r \ll d$.
The mathematically ideal aggregation at the server is
\begin{align}
\Delta W_{ideal}^t = \frac{1}{N}\sum_{i=1}^{N}\Delta W_i^t=\frac{1}{N}\sum_{i=1}^{N}B_i^t A_i^t, \label{eq:ideal}
\end{align}
which provides the exact average of the local weight updates. However, $\Delta W_{ideal}^t$ is generally of rank greater than $r$, defeating the purpose of LoRA’s low-rank parameterization and leading to prohibitive communication overhead when transmitting updates back to clients. Additionally, clients cannot directly resume LoRA training from a high-rank update, complicating continual fine-tuning. To preserve the low-rank structure, a commonly used alternative is to average the LoRA factors separately \cite{zhang2024towards}:
\begin{align}
\Delta W_{naive}^t=\bar{B}^t \bar{A}^t=\left(\frac{1}{N}\sum_{i=1}^N B_i^t\right)\left(\frac{1}{N}\sum_{i=1}^N A_i^t\right).\label{eq:avgbar}
\end{align}
While this guarantees $\mathrm{rank}(\Delta W_{naive}^t) \leq r$, enabling efficient communication and local re-initialization, it introduces a critical inconsistency ($\Delta W_{naive}^t \neq \Delta W_{ideal}^t$). This discrepancy manifests through cross terms $B_i^t A_j^t$ for $i \neq j$, which can destabilize training and degrade the performance of the global model~\cite{chen2025robust,bai2024federated}.

\textbf{The Rotational Invariance Problem.} We argue that the limitations of naive factor-wise averaging cannot be fully explained by algebraic non-commutativity alone, but are strongly influenced by \emph{rotational noise} arising from the rotational invariance of the LoRA decomposition. Given a local update $\Delta W_i^t = B_i^t A_i^t$, this factorization is not unique since for any orthogonal matrix $R \in \mathbb{R}^{r \times r}$, the transformed factors $\tilde{B}_i^t = B_i^t R$ and $\tilde{A}_i^t = R^\top A_i^t$ yield the same update, i.e., $\tilde{B}_i^t \tilde{A}_i^t = B_i^t A_i^t$. As a result, different clients may produce semantically identical updates while representing them in misaligned latent subspaces. When such unaligned factors are averaged directly, as in $\Delta W_{naive}^t$, combining mismatched subspaces leads to destructive interference, exacerbating aggregation error and adversely affecting training stability and global performance.

% \begin{figure*}[t]
%     \centering
%     \begin{subfigure}[t]{0.5\textwidth}
%         \centering
%         \includegraphics[width=\linewidth]{fig/overview-rot.pdf}
%         \caption{Overview of the rotation alignment.}
%         \label{fig:overview-rot}
%     \end{subfigure}
%     \hfill
%     \begin{subfigure}[t]{0.4\textwidth}
%         \centering
%         \includegraphics[width=\linewidth]{fig/fedlora_trajectory_contour_linear.pdf}
%         \caption{Optimization trajectory for a scalar task ($\Delta W^* = 1.0, B_{\text{init}}=0, A_{\text{init}}=0.44$).}
%         \label{fig:traj-contour}
%     \end{subfigure}

%     \caption{Toy example figures.}
%     \label{fig:toyexample}
% \end{figure*}

\begin{figure}
    \centering
    \includegraphics[width=0.95\linewidth]{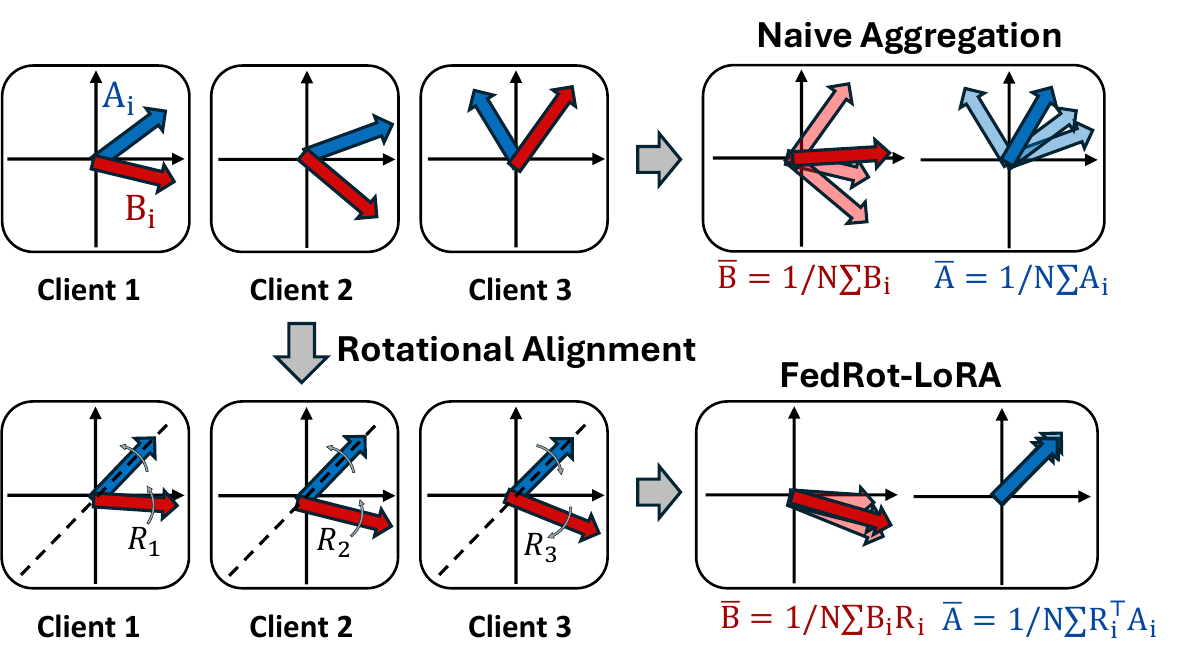}
\caption{Overview of rotational alignment in FedRot-LoRA.
Top: Naive aggregation averages unaligned LoRA factors $(A_i, B_i)$, causing destructive interference due to cross-client subspace mismatch.
Bottom: FedRot-LoRA applies client-specific rotations $R_i$ to align local updates prior to aggregation. This preserves the semantic update while reducing subspace misalignment, leading to lower aggregation error and more stable training behavior.}
    \label{fig:overview-rot}
\vspace{-0.1 in}
\end{figure}

\textbf{Our Approach.} To mitigate the adverse effects of rotational misalignment, we propose \textbf{FedRot-LoRA}, a communication-efficient federated fine-tuning framework that aligns latent subspaces of LoRA updates across clients prior to aggregation. Figure \ref{fig:overview-rot} illustrates the key idea. In each communication round, clients compute local low-rank updates that may span misaligned latent subspaces, leading to noisy aggregation and degraded global performance when averaged directly (top row of Fig.~\ref{fig:overview-rot}). To address this issue, FedRot-LoRA introduces a rotational alignment step that reorients local LoRA factors before aggregation (bottom row of Fig.~\ref{fig:overview-rot}). This alignment preserves the semantic update while reducing cross-client subspace mismatch, mitigating rotational noise and improving training stability.

% We provide a theoretical analysis under heterogeneous settings showing that rotational alignment yields a strictly tighter bound on the aggregation gap compared to na\"ive factor averaging. Empirically, extensive experiments on natural language understanding and generation tasks demonstrate that \textsc{FedRot-LoRA} consistently improves over strong baselines while maintaining LoRA-level efficiency.

Our key contributions are summarized as follows:

\begin{itemize}

    \item We identify \emph{rotational noise}, induced by the rotational invariance of LoRA factorization, as an underexplored source of aggregation error in federated LoRA. To address this issue, we propose \textbf{FedRot-LoRA}, a federated fine-tuning framework that explicitly aligns local LoRA updates prior to aggregation, mitigating cross-client subspace mismatch under heterogeneous data.

    \item We provide a convergence analysis of federated LoRA that examines the aggregation error induced by factor-wise averaging, and show that rotational alignment yields a strictly tighter upper bound on this error in standard federated learning settings.

    \item Through extensive experiments on RoBERTa-Large (GLUE) and Llama 3-8B (GSM8K, HumanEval), we show that FedRot-LoRA consistently outperforms existing federated LoRA baselines across a wide range of client scales, LoRA ranks, and heterogeneity levels.
    
\end{itemize}

\section{Related Work}\label{sec:related}
\subsection{Federated Parameter-Efficient Fine-Tuning} A growing body of work explores applying parameter-efficient fine-tuning (PEFT) methods in FL to adapt large models under resource constraints. \textbf{Prompt-based approaches} apply soft prompt tuning to avoid modifying the full model parameters. Examples include \textit{FedPrompt}~\cite{zhao2023fedprompt} and \textit{pFedPrompt}~\cite{guo2023pfedprompt}, which learn prompt representations that are either globally aggregated or personalized across clients.
While these methods are highly communication-efficient and preserve model privacy, their adaptation capacity can be limited for tasks that require major changes to internal model representations~\cite{lester2021power}. \textbf{Adapter-based approaches} introduce computationally efficient modules trained alongside the frozen backbone to capture task- or modality-specific information. M$^2$FedSA~\cite{zhang2024enhancing} incorporates specialized adapters to capture such information in federated settings.
However, adapter-based methods incur additional inference latency as adapters must be loaded and executed jointly with the backbone model at deployment. In contrast, LoRA-based methods~\cite{wang2024flora,sun2024improving,bai2024federated} compute updates via low-rank weight modifications that can be merged into the backbone after training, incurring no additional inference-time cost.
Combined with their strong adaptation capacity under a low-rank assumption, LoRA-based approaches are particularly well-suited for federated fine-tuning of LLMs. Accordingly, in this work we focus on federated LoRA.

\subsection{Federated LoRA}

LoRA~\cite{hu2022lora} reduces the parameter overhead of fine-tuning by reparameterizing weight updates as the product of two low-rank matrices.
This proved to be an effective mechanism for adapting LLMs under limited computational and communication budgets~\cite{hayou2024lora,wang2024task}. In FL settings, LoRA reduces the cost of fine-tuning across distributed clients by restricting communication to low-rank updates~\cite{wang2024flora}. In practice, however, federated LoRA faces a key challenge: a mismatch between aggregating full local updates and aggregating low-rank factors, as in naive factor-wise averaging~\cite{zhang2024towards}. Prior work has explored several strategies to mitigate this issue. \textbf{(1) Linearization via Parameter Freezing.} Methods such as \textit{FFA-LoRA}~\cite{sun2024improving} freeze one LoRA factor and aggregate only the other factor, ensuring linearity of aggregation.
\textit{RoLoRA}~\cite{chen2025robust} alternates the frozen factor across training rounds.
While these approaches simplify aggregation, they significantly restrict the parameter space, which may slow convergence and limit adaptation capacity.
\textit{FedSA-LoRA}~\cite{guo2024selective} trains both factors locally but aggregates only a subset, leading to incomplete global adapters. \textbf{(2) Decomposition and Reconstruction.} Other methods perform aggregation in the full-weight space and then project the result to a low-rank form.
For example, \textit{FlexLoRA}~\cite{bai2024federated} aggregates full updates and applies SVD-based projection to recover low-rank adapters, while \textit{FedSRD}~\cite{yan2025fedsrd} reconstructs global updates from sparsified client updates before projection.
These approaches incur substantial server-side computation and introduce approximation error due to repeated high-dimensional SVD operations, which can become numerically unstable~\cite{chen2025robust}. 
\textbf{(3) High-Communication Aggregation.} 
Another line of work preserves aggregation fidelity by transmitting extra information beyond low-rank adapters.
\textit{FedEx-LoRA}~\cite{singhal2024exact} augments updates with high-rank residual terms, while \textit{FLoRA}~\cite{wang2024flora} aggregates in the full parameter space with increasing effective rank.
These methods substantially increase communication overhead, undermining the motivation for LoRA in bandwidth-constrained FL.
\haoran{
We compare these federated LoRA methods with FedRot-LoRA in terms of aggregation space, communication, and computation in Table~\ref{tab:baseline_compare} of Appendix~\ref{app:baseline_compare}.}

In contrast to these methods, our work isolates rotational misalignment as a distinct contributor to aggregation error in federated LoRA and addresses it by explicitly aligning client updates prior to aggregation.

\section{Methods}\label{sec:methods}

% In this section, we introduce \textbf{FedRot-LoRA} that employs a rotational alignment step to mitigate the aggregation misalignment in local LoRA updates. 
% We additionally illustrate key insights of FedRot-LoRA through a toy example in 1D optimization space.

% revise
We consider an FL system with $N$ clients that jointly minimize the global objective $f(W) = \frac{1}{N} \sum_{i=1}^{N} f_i(W)$, where the model parameters are updated relying on a LoRA parameterization $W=W_0+\Delta W =W_0+ BA$ with $B \in \mathbb{R}^{d \times r}, A \in \mathbb{R}^{r \times d}$ ($r \ll d$). In communication round $t$, client $i$ locally optimizes $f_i$ to obtain the low-rank update $\Delta W_i^t = B_i^t A_i^t$. Since direct aggregation of the corresponding LoRA factors can introduce a discrepancy between the aggregated update and the average of local updates,
we introduce a rotational alignment step that reorients local LoRA factors prior to aggregation.

\begin{figure}
    \centering
    \includegraphics[width=0.9\linewidth]{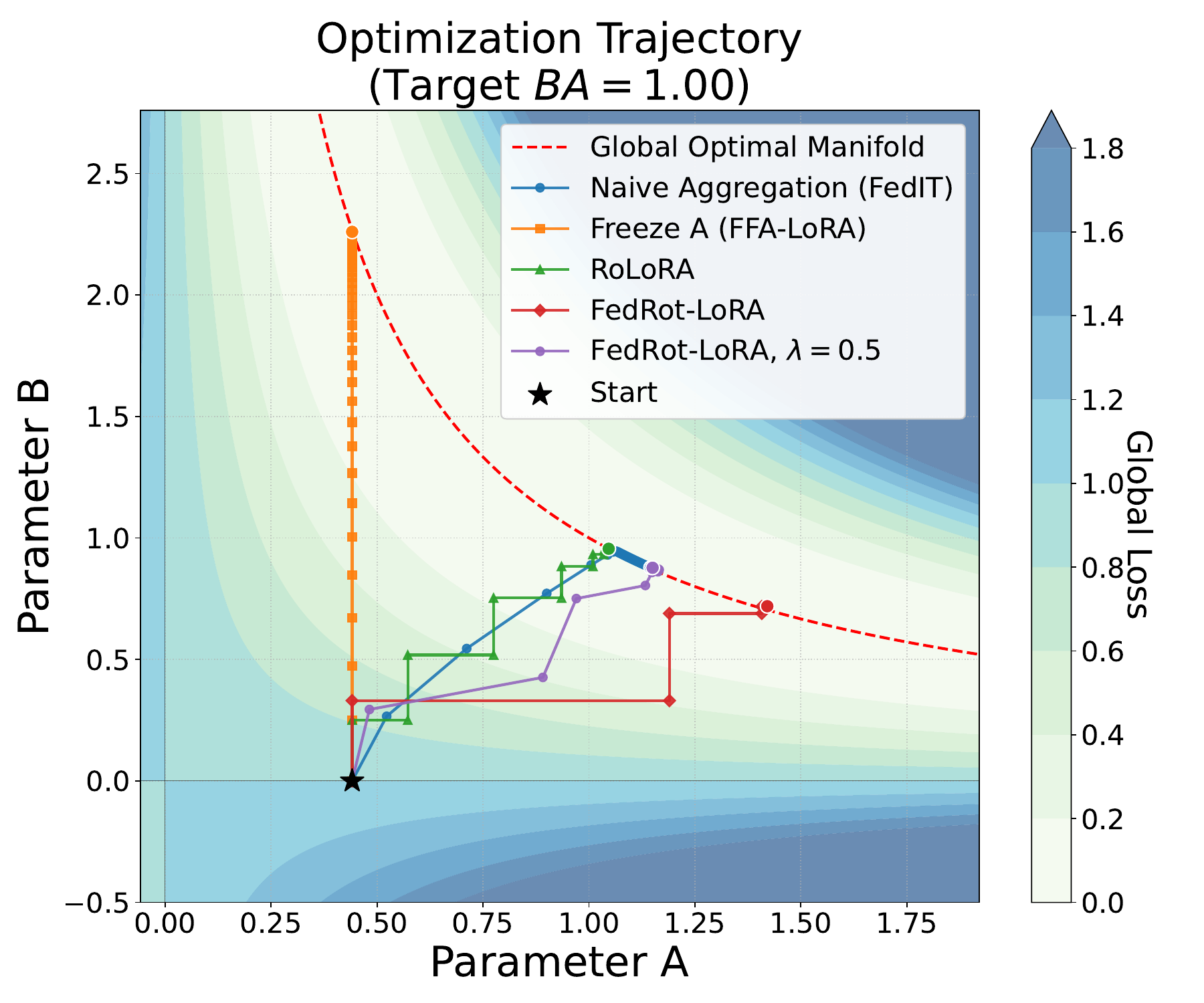}
\caption{Optimization trajectories with target $\Delta W^* = 1.0$, initialized at $\bar{B}^0 = 0$ and $\bar{A}^0 = 0.44$ (black star).
The red dashed curve denotes the optimal solution manifold.
Each marker along a trajectory represents the global factors $(B, A)$ after one communication round under a given method.
% The background heatmap shows the global loss surface, with lighter regions indicating lower loss.
Different aggregation schemes induce distinct optimization paths, illustrating the impact of rotational misalignment on training dynamics.
}
    \label{fig:toyexample}
\vspace{-0.1 in}
\end{figure}

\textbf{Motivating Example.}
We begin with a scalar example to illustrate the effect of aggregation misalignment.
Consider the optimization problem with global loss
$\ell(B, A) = (BA - 1)^2$,
where $B, A \in \mathbb{R}$.
The set of global minimizers satisfies $BA = 1$, forming a hyperbolic solution manifold (Fig.~\ref{fig:toyexample}, red dashed line).
Due to factorization invariance, different clients may converge to distinct but equivalent parameters (e.g., $B_i^t = 2, A_i^t = 0.5$ versus $B_j^t = 1.5, A_j^t = 2/3$). Naive factor averaging (FedIT \cite{zhang2024towards}, blue line) shifts the aggregate update off this manifold, introducing aggregation error and unstable global updates. Parameter-freezing methods (FFA-LoRA \cite{sun2024improving}, RoLoRA \cite{chen2025robust}) avoid this by aggregating a single factor, at the cost of reduced expressivity and slower convergence.

FedRot-LoRA instead aligns local factors prior to aggregation.
In the scalar case, alignment reduces to rescaling: given a reference $A_{\mathrm{ref}}$ (e.g., the previous global parameter $\bar{A}^{t-1}$), each client solves
$\min_{R_i^t}\|R_i^tA_i^t-A_{\mathrm{ref}}\|^2$ with $R_i^t\in\mathbb{R}$, and applies the transformation $(A_i^t,B_i^t)\rightarrow (R_i^tA_i^t,\,B_i^t/R_i^t)$
%\footnote{An analogous formulation applies when aligning $B$.} 
which preserves $B_i^tA_i^t$, yielding stable aggregated updates and faster convergence (red line). We further introduce a soft alignment mechanism (purple line) to improve robustness in early rounds. While the scalar case admits a simple rescaling interpretation, higher-rank LoRA ($r>1$) is characterized by a more complex invariance setting where different clients may span misaligned latent subspaces. There, alignment generalizes from scalar rescaling to transformations that act on the latent subspace. Simple extensions such as diagonal or norm-based scaling cannot resolve relative subspace misalignment, while unrestricted invertible mappings may lead to ill-conditioned factors and require iterative, non–closed-form optimization. As a result, FedRot-LoRA adopts orthogonal alignment, which preserves invertibility, admits an efficient Procrustes solution, and provides $r(r-1)/2$ degrees of freedom for subspace alignment. A detailed comparison between rescaling and rotation is provided in Appendix~\ref{app:exp-rescale}.

% revise, cite and introduce
\textbf{Alternating Factor Alignment.} To mitigate subspace misalignment during aggregation, FedRot-LoRA introduces an alignment step prior to transmitting $B_i^t$ and $A_i^t$ to the server. We formulate this as an alternating optimization problem, where each client aligns its local LoRA factors to a common global reference, defined by the aggregated adapter from the previous round (i.e., $A_{\mathrm{ref}} = \bar{A}^{t-1}$ and $B_{\mathrm{ref}} = \bar{B}^{t-1}$).\footnote{We use the previous global model to avoid extra communication overhead; alternatives are discussed in Appendix \ref{app:exp-ref}.} To balance alignment across both factors, clients alternate between aligning $A$ and $B$ in consecutive communication rounds.
Specifically, each client solves
\begin{align}
\min_{R_i^t}  & \| (R_i^t)^\top A_i^t - A_{\mathrm{ref}} \|_F^2,   &\text{if } t \bmod 2 = 1 \nonumber\\
\min_{R_i^t} & \| B_i^t R_i^t - B_{\mathrm{ref}} \|_F^2,   &\text{if } t \bmod 2 = 0 \nonumber\\
\text{s.t.} \quad & (R_i^t)^\top R_i^t = I_r,\; \det(R_i^t) > 0.\label{eq:constraint}
\end{align}
This optimization is an instance of the orthogonal Procrustes problem \cite{schonemann1966generalized}, which admits a closed-form solution obtained via singular value decomposition (SVD).
Let $M$ denote the correlation matrix, defined as
$M = A_{\mathrm{ref}} (A_i^t)^\top$ in $A$-alignment rounds ($t \bmod 2 = 1$) and
$M = (B_{\mathrm{ref}})^\top B_i^t$ in $B$-alignment rounds ($t \bmod 2 = 0$).
Let $M = U \Sigma V^\top$ be its SVD; the rotation matrix that minimizes the above objective is then given by
\begin{align}
R_i^{t,*} &= V \cdot \text{diag}(1, ..., 1, \det(UV^\top)) \cdot U^\top. \label{eq:svd_solution}
\end{align}
\textbf{Soft Rotation.}
In the early stages of training, the global reference may be noisy or unrepresentative due to limited aggregation history and client heterogeneity. In such cases, the optimal alignment matrix $R_i^{t,*}$ can induce corrections that are unreasonably large, destabilizing training. To improve robustness, we introduce \emph{soft rotation}, which interpolates between no alignment and exact Procrustes alignment. Specifically, we form an interpolated matrix
\begin{align}
R' = (1-\lambda) I + \lambda R_i^{t,*},\label{eq:soft_solution}
\end{align}
and project it by computing its SVD $R' = U \Sigma V^\top$ and setting
$
R_{i,\mathrm{soft}}^t = V \,\mathrm{diag}(1,\ldots,1,\det(UV^\top))\, U^\top.
$
The aligned factors are defined as
\begin{align}
\tilde{A}_i^t &= (R_{i,\mathrm{soft}}^t)^\top A_i^t,
\quad
\tilde{B}_i^t = B_i^t R_{i,\mathrm{soft}}^t.
\label{eq:transfer}
\end{align}
This transformation preserves the semantic update, since
$\tilde{B}_i^t \tilde{A}_i^t = B_i^t A_i^t = \Delta W_i^t$.
The interpolation factor $\lambda \in [0,1]$ controls the strength of alignment. When $\lambda = 1$, soft rotation reduces to hard alignment; when $\lambda = 0$, no alignment is applied and FedRot-LoRA boils down to naive factor-wise aggregation. By reducing reliance on a noisy reference in early rounds, soft rotation prevents over-correction and leads to more stable training behavior.

% While exact rotational alignment effectively removes artificial ambiguity, it can be overly aggressive in heterogeneous federated settings. In the IID regime, where clients optimize similar objectives, local updates $\Delta W_i = B_iA_i$ are expected to be nearly identical, and the optimal Procrustes rotation reliably recovers a shared latent subspace. However, under non-IID data distributions, clients converge to genuinely different yet compatible updates, making the estimated rotation noisy and reference-dependent. Enforcing hard alignment in this case may over-correct local representations and introduce instability as the global reference drifts across rounds. To address this issue, we introduce \emph{soft rotation}, which interpolates between the identity and the optimal alignment before re-orthogonalization, preserving the invariance of $\Delta W_i$ while applying a controlled, variance-reduced adjustment. This improves robustness under heterogeneity while naturally reducing to hard alignment when client updates are homogeneous. Specifically, we compute the soft rotation matrix following: 
% \begin{align}
%     R' &= (1 - \lambda) * I + \lambda  R^*\\
%     R_{\mathrm{soft}}&=U V^\top,\; \text{where }R'=U\Sigma V^\top.
% \end{align}

The full training procedure for FedRot-LoRA is summarized as Algorithm \ref{alg:fedlora2}. In each communication round $t$, the server broadcasts the current global LoRA adapters $\{\bar{A}^{t-1}, \bar{B}^{t-1}\}$ to all clients. Client $i$ initializes local training from these weights and updates its local factors $A_i^t$ and $B_i^t$ on private data. Following local training, clients apply rotational alignment to their updated factors; the aligned factors, $\tilde{A}_i^t$ and $\tilde{B}_i^t$, are then sent to the server, which aggregates them to generate the updated global adapters $\bar{A}^t$ and $\bar{B}^t$. 

\begin{algorithm}[t]
\caption{FedRot-LoRA}
\label{alg:fedlora2}
\begin{algorithmic}[1]
\STATE \textbf{Input:} $N$ clients, learning rate $\eta$, rank $r$, rounds $T$.
\STATE \textbf{Server Init:} Global parameters $W=W_0+\Delta W_0$, $\Delta W_0=\bar{B}^0\bar{A}^0=\mathbf{0}$.
\FOR{round $t = 1$ to $T$}
    \STATE Server broadcasts $\{\bar{A}^{t-1}, \bar{B}^{t-1}\}$ to all clients.
    \FOR{each client $i \in \{1, \dots, N\}$ in parallel}
        \STATE \textbf{Local Training:} Update local $A_i^t, B_i^t$ via SGD starting from $\bar{A}^{t-1}, \bar{B}^{t-1}$.
        
        \STATE \textbf{Rotational Alignment:}
        \IF{$t \bmod 2=1$ (align $A$)}
        \STATE Solve $\min_{R_i^t} \| (R_i^t)^\top A_i^t - A_\mathrm{ref} \|_F^2$, s.t. \eqref{eq:constraint}.
        \ELSE
            \STATE Solve $\min_{R_i^t} \| B_i^tR_i^t - B_\mathrm{ref} \|_F^2$, s.t. \eqref{eq:constraint}.
        \ENDIF
        \STATE Compute rotation matrix $R_{i,\mathrm{soft}}^t$ via Eq. \eqref{eq:soft_solution}.
        \STATE Align $\tilde{A}_i^t=(R_{i,\mathrm{soft}}^t)^\top A_i^t, \tilde{B}_i^t=B_i^t R_{i,\mathrm{soft}}^t$. 
        \STATE Send $\{\tilde{A}_i^t, \tilde{B}_i^t\}$ to server.
    \ENDFOR
    
    \STATE \textbf{Aggregate:} $\bar{A}^t=\frac{1}{N} \sum_{i=1}^N \tilde{A}_i^t$, $\bar{B}^t=\frac{1}{N} \sum_{i=1}^N \tilde{B}_i^t$.
\ENDFOR
\end{algorithmic}
\end{algorithm}
% \textcolor{red}{motivate and add soft rotation. Corollary 4.7 requires gradient norm to be large enough, this can motivate soft rotation. 
% From the toy example, complete rotation may make norm B very small (learn ``too fast,'' after alignment, most information is stored in $A$, then norm $B$ has no time to increase.)
% soft rotation slightly slows down the learning progress, yields more balanced norm B and A. 
% \textbf{The benefit of balanced B and A:} Due to chain rule, small $B$ leads to small gradient of A, if the gradient of A is too small, the corollary 4.7 may not hold (our advantage over naive method is not ensured)
% }

\textbf{Complexity Analysis.} FedRot-LoRA adds a small per-round computational cost that depends only on the LoRA rank. The alignment process involves two steps: \textit{(1) Correlation matrix construction.} 
Each client computes the correlation matrix $M\in\mathbb{R}^{r\times r}$, which requires a matrix multiplication with time complexity \( O(d \cdot r^2) \). \textit{(2) SVD.} An SVD is then performed on the resulting \( r \times r \) matrix \( M \), which incurs a cost of \( O(r^3) \). For the soft rotation, a similar SVD on an \( r \times r \) matrix is required. Overall, the per-round complexity of the alignment step is \( O(d \cdot r^2 + r^3) \). In contrast, decomposition-based methods such as FlexLoRA \cite{bai2024federated} require repeated decompositions or projections involving matrices of dimension $d$, leading to substantially higher computational cost when $d \gg r$. In practice, LoRA ranks are small (e.g., \( r \in \{ 4, 8, 16 \} \)), and thus the alignment cost is modest relative to local training. Furthermore, the rotational alignment step introduces no additional communication overhead beyond standard federated LoRA.
\section{Theoretical Analysis}\label{sec:analysis}
In this section, we study how naive (i.e., factor-wise) aggregation in federated LoRA affects standard nonconvex convergence bounds. We explicitly characterize the aggregation error that arises from averaging low-rank factors instead of the resulting updates, and show that FedRot-LoRA reduces this error, yielding a strictly tighter bound.

\subsection{Convergence Analysis}

For the convergence analysis, we make standard smoothness and bounded gradient assumptions, along with a mild boundedness condition on local LoRA updates.

\begin{assumption}[L-smoothness]\label{assumption:L}
The global objective function $f(W)$ is differentiable and $L$-smooth:
$
    f(Y) \le f(X) + \langle \nabla f(X), Y - X \rangle + \frac{L}{2} \|Y - X\|_F^2.
$
\end{assumption}

\begin{assumption}[Bounded Gradients]\label{assumption:G}
For each client $i$ and round $t$, let $\xi_i$ denote random local data samples. The stochastic gradients with respect to each parameter block $\theta \in \{A,B,W\}$ are unbiased and have bounded norm:
$\mathbb{E}[\nabla_\theta f_i(W^t,\xi_i)] = \nabla_\theta f_i(W^t), \;
\|\nabla_\theta f_i(W^t)\|_F \le G_\theta$.
\end{assumption}

\begin{assumption}[Bounded LoRA Norms]\label{assump:gauge}
For each client $i$, the local LoRA update satisfies
\(
\|B_i^t\|_F\cdot \|A_i^t\|_F \le \tau, \, \forall\, i,t.
\)
\end{assumption}

Assumption~\ref{assumption:L} is a standard smoothness condition used to derive descent-based bounds in non-convex optimization.
Assumption~\ref{assumption:G} ensures that stochastic gradients are uniformly bounded.
Assumption~\ref{assump:gauge} limits the scale ambiguity inherent to the LoRA factorization and implies that, in the fine-tuning regime, local low-rank updates remain bounded and client heterogeneity is limited.
We define the aggregation error as 
\begin{align}
E^{t}=\left(\frac{1}{N}\sum_{i=1}^N B_i^t\right)\left(\frac{1}{N}\sum_{i=1}^N A_i^t\right)
-\frac{1}{N}\sum_{i=1}^N B_i^t A_i^t.
\end{align}
$E^{t}$ quantifies the discrepancy between the aggregation produced by factor-wise averaging and the ideal aggregation of local updates, and can be interpreted as an additive perturbation to the global model update at round $t$.

We present the convergence analysis in Theorem \ref{theorem:converge}, with the proof detailed in Appendix \ref{app:convergence}. 
Our analysis follows standard $L$-smooth nonconvex FL optimization to bound the per-round descent of the global objective \cite{wang2020tackling}, while 
explicitly isolating the aggregation error term $E^t$. This formulation explicitly quantifies how misaligned LoRA updates affect the convergence bound.

\begin{theorem}[Convergence Analysis]\label{theorem:converge}
Let Assumptions~\ref{assumption:L}, \ref{assumption:G}, and \ref{assump:gauge} hold, and let $f^*=\inf_W f(W)$. 
For a learning rate $\eta$, the following stationarity bound holds for the aggregated global model 
$W^t = W_0 + \bar{B}^t \bar{A}^t$ over $T$ communication rounds:
\begin{align} &\min_{t \in \{0, \dots, T\}} \mathbb{E}\left[ \|\nabla_A f(W^t)\|_F^2 + \|\nabla_B f(W^t)\|_F^2 \right] \\ &\leq \frac{f(W^0) - f^*}{\eta T} + \frac{3L\eta^2+2}{2T\eta} \sum_{t=1}^T \mathbb{E}\left[ \frac{\|E^{t+1}\|_F^2}{\eta^2} \right] + \mathcal{O}(\eta). \nonumber \end{align}
\end{theorem}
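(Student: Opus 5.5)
We follow the standard one-step descent argument for $L$-smooth nonconvex federated optimization, with the global iterate written as an ideal averaged update plus the aggregation error. Write $W^{t+1} - W^t = \bar{B}^{t+1}\bar{A}^{t+1} - \bar{B}^t\bar{A}^t$. We decompose this as
\begin{align*}
W^{t+1}-W^t = \underbrace{\frac{1}{N}\sum_{i=1}^N \bigl(B_i^{t+1}A_i^{t+1} - \bar{B}^t\bar{A}^t\bigr)}_{\Delta^t_{\mathrm{ideal}}} + E^{t+1}.
\end{align*}
The ideal part is the average of the local weight changes, since every client starts from $(\bar{B}^t,\bar{A}^t)$.

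For one local SGD step with learning rate $\eta$, the local update is
\begin{align*}
B_i^{t+1}A_i^{t+1}-\bar{B}^t\bar{A}^t = -\eta\bigl(g_{B,i}\bar{A}^t + \bar{B}^t g_{A,i}\bigr) + \eta^2 g_{B,i}g_{A,i}.
\end{align*}
Here $g_{A,i}$ and $g_{B,i}$ are the stochastic factor gradients. Multi-step local training adds a drift term, which we bound by $\mathcal{O}(\eta^2)$ using Assumptions~\ref{assumption:G} and \ref{assump:gauge}.

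Plug this into Assumption~\ref{assumption:L}, $f(W^{t+1}) \le f(W^t) + \langle \nabla f(W^t), W^{t+1}-W^t\rangle + \tfrac{L}{2}\|W^{t+1}-W^t\|_F^2$. The chain rule gives $\nabla_A f = \bar{B}^{t\top}\nabla f$ and $\nabla_B f = \nabla f\,\bar{A}^{t\top}$. Taking expectations and using unbiasedness, the linear ideal term becomes
\begin{align*}
-\eta\bigl(\|\nabla_A f(W^t)\|_F^2+\|\nabla_B f(W^t)\|_F^2\bigr),
\end{align*}
up to client heterogeneity and drift. Those contributions are absorbed into $\mathcal{O}(\eta^2)$ through the gradient bounds $G_A,G_B,G_W$ and $\tau$.

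The error cross term $\langle \nabla f(W^t), E^{t+1}\rangle$ cannot be controlled by unbiasedness, because $E^{t+1}$ is correlated with the stochastic updates. We handle it with Young's inequality $\langle X,Y\rangle \le \tfrac{\eta}{2}\|X\|^2 + \tfrac{1}{2\eta}\|Y\|^2$, which gives $\tfrac{\eta}{2}G_W^2 + \tfrac{1}{2\eta}\|E^{t+1}\|_F^2$. The quadratic term uses $\|X+Y\|^2 \le \tfrac32\|X\|^2 + 3\|Y\|^2$, or a similar split, so it contributes $\tfrac{3L}{2}\|E^{t+1}\|_F^2$ plus an $\mathcal{O}(\eta^2)$ ideal-step term.

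The two error contributions then combine to $\tfrac{3L\eta^2+1}{2\eta}\|E^{t+1}\|_F^2$, or with the paper's constant $(3L\eta^2+2)/(2\eta)$ depending on how Young's weights are chosen. Equivalently, this is $\tfrac{3L\eta^2+2}{2}\cdot\eta\,\|E^{t+1}\|_F^2/\eta^2$. Writing it this way, with $\eta^2$ pulled out, makes the normalization in the statement appear.

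Finally, rearrange, sum over $t$, telescope $f(W^0)-\mathbb{E}f(W^{T})\le f(W^0)-f^*$, divide by $\eta T$, and lower-bound the average of the gradient norms by their minimum. This produces the three terms of the theorem; the $\mathcal{O}(\eta)$ term collects $L\eta G^2\tau$-type and $G_W^2$ pieces. The main obstacle is the bookkeeping of the cross term $\langle\nabla f, E^{t+1}\rangle$ and the local-drift terms. The drift terms require Assumption~\ref{assump:gauge} to keep the factors $\bar{A}^t,\bar{B}^t$ bounded, so that $\|g_{B,i}\bar{A}^t\|$-type quantities are $\mathcal{O}(1)$. I would first try a single local step and then extend to $K$ steps, carrying drift as $\mathcal{O}(K^2\eta^2)$.
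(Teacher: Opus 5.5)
\textbf{Gap in the cross-term step.} The problem is the step you flag as the main obstacle: Young's inequality on $\langle \nabla f(W^t),E^{t+1}\rangle$.

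With your weights, $\langle \nabla f(W^t),E^{t+1}\rangle \le \tfrac{\eta}{2}G_W^2+\tfrac{1}{2\eta}\|E^{t+1}\|_F^2$, the residual $\tfrac{\eta}{2}G_W^2$ is only first order in $\eta$. After summing over $T$ rounds and dividing by $\eta T$, it becomes the constant $G_W^2/2$. That is not $\mathcal{O}(\eta)$, so the stated bound does not follow. Your later claim that the $G_W^2$ pieces land in $\mathcal{O}(\eta)$ contradicts this weight choice.

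The constant bookkeeping is also inconsistent. You have $\tfrac{3L}{2}+\tfrac{1}{2\eta}=\tfrac{3L\eta+1}{2\eta}$, not $\tfrac{3L\eta^2+1}{2\eta}$. Moreover, any per-round coefficient of the form $c/\eta$ produces $\|E^{t+1}\|_F^2/\eta$, not $\|E^{t+1}\|_F^2/\eta^2$, after dividing by $\eta T$.

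The statement actually forces the weight. To reach its form, the per-round coefficient of $\|E^{t+1}\|_F^2$ must be at most $\tfrac{3L}{2}+\tfrac{1}{\eta^2}$, and every other residual must be $\mathcal{O}(\eta^2)$. That requires
\begin{align*}
\langle \nabla f(W^t),E^{t+1}\rangle \le \tfrac{\eta^2}{2}G_W^2+\tfrac{1}{2\eta^2}\|E^{t+1}\|_F^2 ,
\end{align*}
and this choice is exactly where the $\|E^{t+1}\|_F^2/\eta^2$ normalization in the theorem comes from.

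\textbf{Comparison with the paper's route.} With that correction your argument goes through, and it differs from the paper's.

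The paper applies smoothness twice, through the intermediate point $W^{t+1}_{\mathrm{ideal}}$. First it compares $W^{t+1}$ with $W^{t+1}_{\mathrm{ideal}}$. Then it compares $W^{t+1}_{\mathrm{ideal}}$ with $W^t$, after rewriting $W^{t+1}_{\mathrm{ideal}}-W^t=-E^{t+1}+(W^{t+1}-W^t)$. Each step uses Young with weights $\eta^2/2$ and $1/(2\eta^2)$; together with the two quadratic terms this gives $(\tfrac{3L}{2}+\tfrac{1}{\eta^2})\|E^{t+1}\|_F^2$. The paper then expands the actual factor-averaged step $(\bar B^t-\eta G_B^t)(\bar A^t-\eta G_A^t)-\bar B^t\bar A^t$.

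Your single smoothness step, with $W^{t+1}-W^t=\Delta^t_{\mathrm{ideal}}+E^{t+1}$, uses Young only once. It yields $\tfrac{3L\eta^2+1}{2T\eta}$ in front of $\sum_t\mathbb{E}[\|E^{t+1}\|_F^2/\eta^2]$, which implies the stated $\tfrac{3L\eta^2+2}{2T\eta}$. It also needs the gradient bound only at $W^t$. By contrast, the paper's first step bounds $\|\nabla f(W^{t+1}_{\mathrm{ideal}})\|_F$ by $G_W$, which Assumption~\ref{assumption:G}, stated at $W^t$, does not literally cover.

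\textbf{Local drift.} The paper's proof handles only one local step, with all stochastic gradients evaluated at $W^t$, so you can drop the $K$-step drift. If you keep it, controlling the gradient drift needs smoothness of each $f_i$. That is not among the assumptions: Assumption~\ref{assumption:L} is stated only for the global $f$, and Assumptions~\ref{assumption:G} and \ref{assump:gauge} alone do not bound it.
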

\textit{Interpretation:}
The bound consists of three terms: the initial optimality gap, the accumulated aggregation error \( \|E^{t+1}\|_F^2 \), and a learning-rate-dependent term \( \mathcal{O}(\eta) \).  
The aggregation error term quantifies the effect of naive factor-wise aggregation. Under the ideal aggregation, this term vanishes, resulting in a tighter convergence bound at the cost of rank expansion and increased communication cost.
FedRot-LoRA reduces this error by aligning local updates prior to aggregation; in the next subsection, we formalize how such alignment leads to a strictly tighter aggregation error bound.

\subsection{Rotational Alignment Yields Tighter Error Bound}
We here analyze how rotational alignment affects the aggregation error $\|E^{t}\|_F$ introduced by factor-wise averaging. Under mild assumptions, we show that the proposed alignment step yields a strictly tighter upper bound on this error term, thereby strengthening the convergence bound derived in the previous subsection.

Let $\tilde{A}_i^t(\lambda)$ and $\tilde{B}_i^t(\lambda)$
denote the soft-aligned LoRA factors for client $i$ in round $t$, obtained with $\lambda\in[0,1]$. We quantify cross-client misalignment via the dispersion defined as
\begin{align}
\Phi\left(\lambda\right)=
\begin{cases}
\frac{1}{N}\sum_{i=1}^N \|\tilde{A}_i^t(\lambda)-A_{\mathrm{ref}}\|_F^2,\,\text{if }t\bmod 2=1\nonumber\\
\frac{1}{N}\sum_{i=1}^N \|\tilde{B}_i^t(\lambda)-B_{\mathrm{ref}}\|_F^2,\,\text{if }t\bmod 2=0. \nonumber
\end{cases}
\end{align}
Note that $\Phi(0)$ corresponds to the dispersion under factor-wise aggregation without rotational alignment. We define the relative alignment gain as
\begin{align}
\alpha(\lambda)=1-\frac{\Phi\left(\lambda\right)}{\Phi\left(0\right)}.
\end{align}

\begin{assumption}[Linear Lower Envelope]\label{assump:gain}
There exists a constant $c_0>0$ such that
$\alpha(\lambda) \ge c_0 \lambda$ for all $\lambda \in (0,1]$.
% An analogous condition holds in $B$-alignment rounds, with $A$ replaced by $B$.
\end{assumption}

\begin{assumption}[Non-IID Data Distribution]\label{assump:noniid}
There exist $\delta_A,\delta_B>0$ such that
$\|A_i^t-A_{\mathrm{ref}}\|_F \ge \delta_A$ and
$\|B_i^t-B_{\mathrm{ref}}\|_F \ge \delta_B$.
\end{assumption}

\begin{assumption}[Bounded Dispersion]\label{assump:dispersion}
There exist $\kappa>0$ such that the optimal rotation matrices $R_i^{t,*}$ satisfy: $\|R_i^{t,*}-I\|_F\leq \kappa \|A_i^t-A_{\mathrm{ref}}\|_F$ when aligning $A$, and $\|R_i^{t,*}-I\|_F\leq \kappa \|B_i^t-B_{\mathrm{ref}}\|_F$ when aligning $B$.
\end{assumption}

Assumption~\ref{assump:gain} characterizes the effectiveness of rotational alignment as a function of the alignment strength by requiring the gain to be lower bounded.
Assumption~\ref{assump:noniid} captures non-vanishing client drift induced by data heterogeneity.
Assumption~\ref{assump:dispersion} requires that rotational corrections remain controlled when local updates are close to the reference.
\haoran{We provide empirical diagnostics supporting these assumptions in Appendix~\ref{app:assumptions}.}

\begin{theorem}[Error Bound Analysis]
\label{thm:odd-round}
In a round $t$ where matrix $A$ is actively aligned, and under Assumptions \ref{assump:gauge}, \ref{assump:gain}, \ref{assump:noniid}, and \ref{assump:dispersion}, the aggregation error satisfies
\begin{align}
\|E^t_{\text{aligned}}\|_F
&\leq 
\underbrace{\frac{1}{N} \sum_{i=1}^N \left( \|\tilde{B}_i^t - B_{\mathrm{ref}}\|_F^2 + \|\tilde{A}_i^t - A_{\mathrm{ref}}\|_F^2 \right)}_{\text{Error bound with rotational alignment}} \nonumber\\
&\leq \frac{1}{N}\sum_{i=1}^N \bigg(\underbrace{\|B_i^t-B_{\mathrm{ref}}\|_F^2+\|A_i^t-A_{\mathrm{ref}}\|_F^2}_{\text{Error bound without alignment}} \nonumber\\
&-\underbrace{\Gamma(\lambda)\cdot\|A_i^t-A_{\mathrm{ref}}\|_F^2}_{\text{Alignment gain}}
\bigg),\nonumber
\end{align}
where $\Gamma(\lambda)=\left(c_0-\frac{4\sqrt{\tau}\kappa\eta G_B}{\delta_A}\right)\lambda-4\kappa^2 \lambda^2\tau$, which is positive for a non-trivial range of $\lambda$ under mild conditions. An analogous result holds when aligning matrix $B$.
\end{theorem}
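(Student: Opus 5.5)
The argument has three parts. First we bound $\|E^t_{\text{aligned}}\|_F$ by the dispersion around the reference. Then we treat the $A$-block and the $B$-block separately. The $A$-block gains from alignment through Assumption~\ref{assump:gain}. The $B$-block is penalised, but Assumptions~\ref{assump:dispersion}, \ref{assump:gauge}, \ref{assump:noniid} keep that penalty under control.

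\textbf{Step 1 (error vs.\ dispersion).} Write $\tilde{B}_i=B_{\mathrm{ref}}+\beta_i$ and $\tilde{A}_i=A_{\mathrm{ref}}+a_i$. Expanding both $\bar{\tilde B}\bar{\tilde A}$ and $\frac1N\sum_i\tilde B_i\tilde A_i$, the terms $B_{\mathrm{ref}}A_{\mathrm{ref}}$ and the linear terms $B_{\mathrm{ref}}\bar a$, $\bar\beta A_{\mathrm{ref}}$ appear in both and cancel. What remains is
\[
E^t_{\text{aligned}}=\bar\beta\,\bar a-\frac1N\sum_i\beta_i a_i .
\]
This is minus the empirical cross-covariance of the deviations. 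We bound it with $\|XY\|_F\le\|X\|_F\|Y\|_F$, Jensen, and $2xy\le x^2+y^2$:
\[
\|E^t_{\text{aligned}}\|_F\le\frac1N\sum_i\|\beta_i\|_F\|a_i\|_F+\|\bar\beta\|_F\|\bar a\|_F\le\frac1N\sum_i\bigl(\|\beta_i\|_F^2+\|a_i\|_F^2\bigr).
\]
This gives the first inequality. The same argument with $\lambda=0$ gives the ``without alignment'' bound.

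\textbf{Step 2 ($A$-block gain).} By definition of $\Phi$ and Assumption~\ref{assump:gain},
\[
\frac1N\sum_i\|\tilde A_i-A_{\mathrm{ref}}\|_F^2=\Phi(\lambda)\le(1-c_0\lambda)\,\frac1N\sum_i\|A_i-A_{\mathrm{ref}}\|_F^2 .
\]

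\textbf{Step 3 ($B$-block penalty).} This is the main obstacle. Write $\tilde B_i-B_{\mathrm{ref}}=(B_i-B_{\mathrm{ref}})+B_i(R_{\mathrm{soft}}-I)$ and expand the square. We need three estimates.
\begin{itemize}
\item[(i)] $\|R_{\mathrm{soft}}-I\|_F\le 2\lambda\|R^*-I\|_F$. Indeed $\|R'-I\|_F=\lambda\|R^*-I\|_F$, and the polar projection moves $R'$ by at most its distance to $SO(r)$, which is at most its distance to $I$. The factor $2$ comes from the triangle inequality. Combined with Assumption~\ref{assump:dispersion}, this gives $\|R_{\mathrm{soft}}-I\|_F\le2\lambda\kappa\|A_i-A_{\mathrm{ref}}\|_F$.
\item[(ii)] The quadratic term. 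We use $\|B_i\|_F^2\lesssim\tau$ from Assumption~\ref{assump:gauge}, presumably under a balanced normalisation; this is where $\sqrt\tau$ and $\tau$ enter. It is then bounded by $4\kappa^2\lambda^2\tau\|A_i-A_{\mathrm{ref}}\|_F^2$.
\item[(iii)] The cross term $2\langle B_i-B_{\mathrm{ref}},B_i(R_{\mathrm{soft}}-I)\rangle$. One round of local SGD starting from $B_{\mathrm{ref}}=\bar B^{t-1}$ gives $\|B_i-B_{\mathrm{ref}}\|_F\le\eta G_B$ by Assumption~\ref{assumption:G}. Together with $\|B_i\|_F\le\sqrt\tau$, this bounds the cross term by a constant times $\sqrt\tau\,\eta G_B\,\kappa\lambda\|A_i-A_{\mathrm{ref}}\|_F$. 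To express it as a multiple of $\|A_i-A_{\mathrm{ref}}\|_F^2$, divide and multiply by $\|A_i-A_{\mathrm{ref}}\|_F\ge\delta_A$ from Assumption~\ref{assump:noniid}. The result is $\frac{4\sqrt\tau\kappa\eta G_B}{\delta_A}\lambda\|A_i-A_{\mathrm{ref}}\|_F^2$.
\end{itemize}
Tracking the constants so that exactly these coefficients appear is the delicate part, in particular the factor $2$ from the projection and the role of $\tau$ versus $\sqrt\tau$.

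\textbf{Step 4 (combine).} Summing Steps 2 and 3 inside Step 1 gives
\[
\frac1N\sum_i\Bigl(\|B_i-B_{\mathrm{ref}}\|_F^2+\|A_i-A_{\mathrm{ref}}\|_F^2-\Gamma(\lambda)\|A_i-A_{\mathrm{ref}}\|_F^2\Bigr).
\]
$\Gamma$ is a concave quadratic vanishing at $\lambda=0$. It is therefore positive on $\bigl(0,\,(c_0-4\sqrt\tau\kappa\eta G_B/\delta_A)/(4\kappa^2\tau)\bigr)$ whenever $\eta$ is small enough that the linear coefficient is positive. The $B$-aligned rounds follow symmetrically by swapping the roles of $A$ and $B$.
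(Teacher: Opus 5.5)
Your proposal is correct and matches the paper's proof in Steps 2--4: you use Assumption~\ref{assump:gain} for the $A$-block, the same soft-shrinkage bound $\|R_{\mathrm{soft}}-I\|_F\le 2\lambda\|R^*-I\|_F$ (feasibility of $I$ plus the triangle inequality), the balanced rescaling of the paper's Remark~\ref{rem:rescale} to get $\|B_i^t\|_F\le\sqrt{\tau}$, the one-step bound $\|B_i^t-B_{\mathrm{ref}}\|_F\le\eta G_B$, and division by $\delta_A$. The only difference is Step 1, where your reference-centred identity $E=\bar\beta\bar a-\frac1N\sum_i\beta_i a_i$ with Jensen and AM--GM replaces the paper's chain of Lagrange's identity, Cauchy--Schwarz, AM--GM and the pairwise-to-reference reduction, and reaches the same constant more directly.
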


\begin{corollary}[Feasible $\lambda$ for Strict Improvement]\label{corollary:lambda}
If the learning rate satisfies
\(
\eta < \frac{c_0\delta_A}{4\sqrt{\tau}\kappa G_B},
\)
then there exists a non-trivial range of soft rotation strengths $\lambda$ for which the alignment gain in Theorem~\ref{thm:odd-round} is strictly positive. In particular, if 
\(0<\lambda < \min\left\{1,\frac{c_0\delta_A-4\sqrt{\tau}\kappa\eta G_B}{4\kappa^2\tau\delta_A}\right\}\), the aggregation error admits a strictly smaller upper bound than that obtained without alignment.
\end{corollary}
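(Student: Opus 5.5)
The corollary follows from Theorem~\ref{thm:odd-round} by a sign analysis of the quadratic $\Gamma(\lambda)$. The theorem gives
\[
\|E^t_{\text{aligned}}\|_F \le \frac{1}{N}\sum_i\Big(\|B_i^t-B_{\mathrm{ref}}\|_F^2+\|A_i^t-A_{\mathrm{ref}}\|_F^2\Big) - \Gamma(\lambda)\,\frac{1}{N}\sum_i\|A_i^t-A_{\mathrm{ref}}\|_F^2 .
\]
By Assumption~\ref{assump:noniid}, the factor multiplying $\Gamma(\lambda)$ satisfies $\frac{1}{N}\sum_i\|A_i^t-A_{\mathrm{ref}}\|_F^2\ge\delta_A^2>0$. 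It therefore suffices to show $\Gamma(\lambda)>0$ on the stated range. Then the subtracted term is strictly positive, and the aligned bound is strictly smaller than the unaligned one.

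To locate the positive range, I write $\Gamma(\lambda)=\lambda\,(a-b\lambda)$ with
\[
a = c_0-\frac{4\sqrt{\tau}\kappa\eta G_B}{\delta_A},\qquad b = 4\kappa^2\tau>0 .
\]
The learning-rate condition $\eta<\frac{c_0\delta_A}{4\sqrt{\tau}\kappa G_B}$ is exactly equivalent to $a>0$. This is a one-line rearrangement, using that $\delta_A,\tau,\kappa,G_B$ are positive.

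For $\lambda>0$, $\Gamma(\lambda)>0$ holds if and only if $\lambda<a/b$. Multiplying the numerator and denominator of $a/b$ by $\delta_A$ gives
\[
\frac{a}{b}=\frac{c_0\delta_A-4\sqrt{\tau}\kappa\eta G_B}{4\kappa^2\tau\delta_A},
\]
which matches the stated threshold. Intersecting with the admissible range $\lambda\in(0,1]$ from soft rotation gives the interval in the corollary. That interval is nonempty because $a/b>0$.

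There is no real obstacle here; the only care needed is bookkeeping. First, the strictness of the improvement depends on the nonzero multiplier from Assumption~\ref{assump:noniid}, so the argument must use the lower bound $\delta_A^2$ rather than mere nonnegativity. Second, Theorem~\ref{thm:odd-round} must be invoked with the same $\lambda$, so that Assumption~\ref{assump:gain}, which is stated only on $(0,1]$, applies. The $B$-alignment case follows symmetrically by swapping $A\leftrightarrow B$ and $\delta_A\to\delta_B$, $G_B\to G_A$.
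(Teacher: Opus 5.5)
Your proposal is correct and follows essentially the same route as the paper, which also rewrites positivity of the gain as the quadratic inequality $4\kappa^2\tau\lambda^2+\bigl(\frac{4}{\delta_A}\sqrt{\tau}\kappa\eta G_B-c_0\bigr)\lambda<0$. The paper then reads off the learning-rate condition as negativity of the linear coefficient and intersects the resulting interval with $(0,1]$. Your explicit use of Assumption~\ref{assump:noniid} to make the multiplier $\frac{1}{N}\sum_i\|A_i^t-A_{\mathrm{ref}}\|_F^2\ge\delta_A^2>0$, which is what makes the improvement strict, is a small but welcome step that the paper leaves implicit.
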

\textit{Interpretation.}  
Theorem~\ref{thm:odd-round} characterizes the effect of rotational alignment on the aggregation error through an explicit upper bound.
Compared with naive factor averaging, rotational alignment introduces a $\lambda$-controlled \emph{alignment gain}.
Corollary~\ref{corollary:lambda} shows that there exists a non-trivial range of
\(
0 < \lambda < \min\left\{1,\frac{c_0\delta_A-4\sqrt{\tau}\kappa\eta G_B}{4\kappa^2\tau\delta_A}\right\}
\)
for which this gain is strictly positive, yielding a strictly tighter upper bound than naive aggregation under the same analysis. 
This provides a concrete guideline for selecting $\lambda$ to balance alignment strength and training stability.
\textcolor{blue}{This range is a conservative sufficient condition arising from worst-case bounds, rather than a tight characterization of all effective choices of $\lambda$.
In practice, as shown in our sensitivity experiments (see Section \ref{sec:exp}), FedRot-LoRA remains effective over a broader range of $\lambda$ values, suggesting that the theoretical condition captures the existence of a favorable alignment regime while the empirically useful range can be larger.
}

\section{Experiments}\label{sec:exp}
\begin{table*}[t]
\centering
\caption{GLUE benchmark results comparing different federated LoRA methods under three client scales ($N\in\{3,10, 50\}$), using $r=4$ and Dirichlet concentration parameter $h=0.5$. FedRot-LoRA achieves the highest average accuracy and shows improved stability across tasks.}
\label{tab:results}
\renewcommand{\arraystretch}{1.0} % Increases row height for readability
\resizebox{0.99\linewidth}{!}{
\begin{tabular}{clcccccc}
\toprule
\textbf{Clients Num} & \textbf{Methods} & \textbf{SST-2} & \textbf{QNLI} & \textbf{QQP} & \textbf{RTE} & \textbf{MNLI} & \textbf{Avg.} \\ 
\midrule
\multirow{5}{*}{3} & FedIT \cite{zhang2024towards} & $0.953\pm0.001$     &  $0.926\pm 0.002$    & $0.771\pm 0.098$  & $0.840 \pm 0.013$ & $0.866 \pm 0.001$ & $0.8712$ \\ 
& 
\textcolor{blue}{FlexLoRA} \cite{bai2024federated} & $0.949\pm 0.003$ &   $0.910\pm 0.007$   & $0.832\pm 0.008$ & $0.813\pm 0.015$ & $0.845\pm 0.008$ & $0.8698$ \\
& FFA-LoRA \cite{sun2024improving} & $0.768\pm 0.056$ &   $0.926\pm 0.002$   & $0.838\pm 0.002$ & $0.830\pm 0.005$ & $0.862 \pm 0.001$ & $0.8448$ \\ 
& RoLoRA \cite{chen2025robust} & $0.951\pm 0.004$     & $0.917\pm 0.011$    & $0.841\pm 0.012$  & $0.854 \pm 0.005$ & $0.868\pm 0.005$ & $0.8862$ \\ 
% \rowcolor{highlight} % Highlights the row below
& \cellcolor{highlight}FedRot-LoRA (Ours)  &  \cellcolor{highlight}$\mathbf{0.954\pm 0.001}$ & \cellcolor{highlight}$\mathbf{0.926\pm 0.002}$  & \cellcolor{highlight}$\mathbf{0.842\pm 0.002}$ & \cellcolor{highlight}$\mathbf{0.868 \pm 0.014}$ & \cellcolor{highlight}$\mathbf{0.876 \pm 0.002}$ & \cellcolor{highlight}$\mathbf{0.8932}$ \\ 
\bottomrule
\multirow{5}{*}{10} & FedIT \cite{zhang2024towards} & $0.949\pm 0.002$     &  $0.913\pm 0.004$    & $0.849\pm 0.000$  & $0.621\pm 0.054$ &  $0.849\pm 0.002$ & $0.8362$ \\
& \textcolor{blue}{FlexLoRA} \cite{bai2024federated} & $0.922\pm 0.009$ &   $0.876\pm 0.018$   & $0.801\pm 0.032$ & $0.592\pm 0.030$ & $0.792 \pm 0.011$ & $0.7966$ \\
& FFA-LoRA \cite{sun2024improving} & $0.946\pm 0.001$ &   $0.894\pm 0.008$   &  $0.849\pm 0.002$ & $0.600\pm 0.024$ & $0.840\pm 0.002$ & $0.8258$ \\ 
& RoLoRA \cite{chen2025robust} &  $0.956\pm 0.001$     & $0.911\pm 0.015$    & $0.863\pm 0.001$  & $\mathbf{0.805\pm 0.007}$ & $0.858\pm 0.005$ & $0.8786$ \\ 
% \rowcolor{highlight} % Highlights the row below
& \cellcolor{highlight}FedRot-LoRA (Ours)  &  \cellcolor{highlight}$\mathbf{0.958\pm 0.000}$ & \cellcolor{highlight}$\mathbf{0.921\pm 0.010}$  & \cellcolor{highlight}$\mathbf{0.868\pm 0.001}$ & \cellcolor{highlight}$0.798\pm 0.005$ & \cellcolor{highlight}$\mathbf{0.864\pm 0.001}$ & \cellcolor{highlight}$\mathbf{0.8818}$ \\ 
\bottomrule
\multirow{5}{*}{\textcolor{blue}{50}} & \textcolor{blue}{FedIT} \cite{zhang2024towards} & $0.928\pm 0.023$     &  $0.803\pm 0.005$    & $0.814\pm 0.022$  & $0.622\pm 0.051$ &  $0.673\pm 0.023$ & $0.7680$ \\
& \textcolor{blue}{FlexLoRA} \cite{bai2024federated} & $0.752\pm 0.135$ &   $0.733\pm 0.121$   & $0.739\pm 0.066$ & $0.632\pm 0.059$ & $0.677 \pm 0.032$ & $0.7066$ \\
& \textcolor{blue}{FFA-LoRA} \cite{sun2024improving} & $0.933\pm 0.015$ &   $0.851\pm 0.004$   &  $0.716\pm 0.018$ & $0.658\pm 0.038$ & $0.701\pm 0.008$ & $0.7718$ \\ 
& \textcolor{blue}{RoLoRA} \cite{chen2025robust} &  $0.851\pm 0.176$     & $0.893\pm 0.042$    & $0.847\pm 0.003$  & $0.733\pm 0.064$ & $0.798\pm 0.114$ & $0.8244$ \\ 
% \rowcolor{highlight} % Highlights the row below
& \cellcolor{highlight}\textcolor{blue}{FedRot-LoRA (Ours)}  &  \cellcolor{highlight}$\mathbf{0.947\pm 0.004}$ & \cellcolor{highlight}$\mathbf{0.920\pm 0.012}$  & \cellcolor{highlight}$\mathbf{0.854\pm 0.002}$ & \cellcolor{highlight}$\mathbf{0.787\pm 0.004}$ & \cellcolor{highlight}$\mathbf{0.859\pm 0.003}$ & \cellcolor{highlight}$\mathbf{0.8734}$ \\ 
\bottomrule
\end{tabular}}
\vspace{-0.1 in}
\end{table*}

In this section, we evaluate FedRot-LoRA against representative federated LoRA baselines on natural language understanding and generative tasks.
We vary the number of clients, LoRA rank, and degree of data heterogeneity to examine performance across a range of federated settings.

%In this section, we evaluate FedRot-LoRA against representative federated LoRA baselines on natural language understanding and generative tasks, under varying numbers of clients, LoRA ranks, and degrees of data heterogeneity. 

\subsection{Experimental Setup}

\textbf{Datasets and Models.}
For natural language understanding tasks, we use the pretrained RoBERTa-Large~\cite{liu2019roberta} model from the HuggingFace Transformers library~\cite{wolf2020transformers} and evaluate all methods on five datasets from GLUE~\cite{wang2018glue}: SST-2, QNLI, MNLI, QQP, and RTE.
For generative tasks, we use Llama 3-8B~\cite{grattafiori2024llama}, evaluating mathematical reasoning on GSM8K~\cite{cobbe2021gsm8k} and code generation on HumanEval~\cite{chen2021humaneval}, with models trained on CodeSearchNet~\cite{husain2019codesearchnet}.

\textbf{Baselines.} 
We compare FedRot-LoRA against three federated LoRA baselines.
\textit{FedIT} \cite{zhang2024towards} trains both LoRA factors $A$ and $B$ and aggregates them via naive averaging $(\bar{A}, \bar{B})$, preserving full expressivity but being susceptible to aggregation errors due to subspace misalignment. \textit{FFA-LoRA} \cite{sun2024improving} freezes $A$ with random initialization and trains only $B$, allowing linear aggregation rule. \textit{RoLoRA} \cite{chen2025robust} alternates between updating $B$ (with $A$ frozen) and updating $A$ (with $B$ frozen) across communication rounds. Both
FFA-LoRA and RoLoRA avoid bilinear factor aggregation, but achieve that by restricting the effective parameter space, which can reduce expressivity and slow convergence.

\textbf{Implementation Details.} 
We implement all methods using the FederatedScope-LLM framework \cite{kuang2024federatedscope}.
All experiments are conducted on RTX 6000 Ada and GH200 GPUs. To evaluate robustness across runs, experiments are repeated with three random seeds, and we report the mean accuracy with standard deviation. We perform a grid search over the learning rate $\eta \in \{5\text{e-}4, 1\text{e-}3, 5\text{e-}3, 2\text{e-}2\}$ and the alignment strength $\lambda\in\{0.2,0.4,0.6,0.8,1.0\}$. 
For each method, hyperparameters are tuned separately using validation, and the best-performing configuration is reported in Appendix \ref{app:exp-settings}.
We set the number of local epochs to 20 and the number of communication rounds to 250 for all natural language understanding tasks, and to 30 local epochs and 200 communication rounds for generative tasks.

\begin{table}[t]
\centering
\caption{FedRot-LoRA achieves an order-of-magnitude reduction in aggregation error compared to FedIT. 
% Factor-freezing baselines avoid aggregation error but sacrifice trainable capacity and performance.
}\label{tab:error}
\renewcommand{\arraystretch}{1.0}
\resizebox{0.9\linewidth}{!}{
\begin{tabular}{@{}lccccc@{}}
\toprule
\textbf{Methods}   & \textbf{\textcolor{blue}{SST-2}}  & \textbf{\textcolor{blue}{QNLI}}    & \textbf{QQP}      & \textbf{RTE}       & \textbf{MNLI}        \\ \midrule
FedIT     & $7.49\text{e-}3$ & $1.98\text{e-}3$  & $1.04\text{e-}2$ & $1.91\text{e-}3$ & $3.98\text{e-}3$ \\ \midrule
\rowcolor{highlight}
FedRot-LoRA & $4.72\text{e-}4$ & $1.34\text{e-}4$ & $8.70\text{e-}4$ & $2.11\text{e-}4$ & $1.48\text{e-}4$ \\ \bottomrule
\end{tabular}}
\vspace{-0.1 in}
\end{table}

\begin{table}[t]
\centering
\caption{MNLI accuracy under different LoRA ranks ($r \in \{\textcolor{blue}{2}, 4, 8, 16, \textcolor{blue}{24}\}$). FedRot-LoRA outperforms all baselines across ranks, demonstrating stable performance across ranks.
}
\label{tab:ranks} 
\renewcommand{\arraystretch}{1.0} % Increases row height for readability
\resizebox{0.99\linewidth}{!}{
\begin{tabular}{clc}
\toprule
\textbf{Rank} & \textbf{Methods} &  \textbf{ACC$\pm$std@MNLI}  \\ 
\midrule
\multirow{5}{*}{\textcolor{blue}{2}} & \textcolor{blue}{FedIT} \cite{zhang2024towards}      &  $0.844\pm 0.051$     \\ 
& \textcolor{blue}{FlexLoRA} \cite{bai2024federated}  &   $0.811\pm 0.108$   \\ 
& \textcolor{blue}{FFA-LoRA} \cite{sun2024improving}  &   $0.837\pm 0.008$   \\ 
& \textcolor{blue}{RoLoRA} \cite{chen2025robust}      & $0.860\pm 0.002$    \\ 
% \rowcolor{highlight} % Highlights the row below
& \cellcolor{highlight}{\textcolor{blue}{FedRot-LoRA (Ours)}}   & \cellcolor{highlight}$\mathbf{0.877\pm 0.005}$   \\ 
\bottomrule
\multirow{5}{*}{4} & FedIT \cite{zhang2024towards}      &  $0.866\pm 0.001$     \\ 
& \textcolor{blue}{FlexLoRA} \cite{bai2024federated}  &   $0.845\pm 0.008$   \\ 
& FFA-LoRA \cite{sun2024improving}  &   $0.862\pm 0.001$   \\ 
& RoLoRA \cite{chen2025robust}      & $0.868\pm 0.005$    \\ 
% \rowcolor{highlight} % Highlights the row below
& \cellcolor{highlight}{FedRot-LoRA (Ours)}   & \cellcolor{highlight}$\mathbf{0.876\pm 0.002}$   \\ 
\bottomrule
\multirow{5}{*}{8} & FedIT \cite{zhang2024towards}      &  $0.861\pm 0.001$    \\ 
& \textcolor{blue}{FlexLoRA} \cite{bai2024federated}  &   $0.836\pm 0.021$   \\
& FFA-LoRA \cite{sun2024improving}  &   $0.864\pm 0.001$   \\ 
& RoLoRA \cite{chen2025robust}   & $0.869\pm 0.004$   \\ 
% \rowcolor{highlight} % Highlights the row below
& \cellcolor{highlight}{FedRot-LoRA (Ours)}  & \cellcolor{highlight}$\mathbf{0.877\pm 0.002}$  \\ 
\bottomrule
\multirow{5}{*}{16} & FedIT \cite{zhang2024towards}   &  $0.855\pm 0.003$    \\ 
& \textcolor{blue}{FlexLoRA} \cite{bai2024federated}  &   $0.852\pm 0.010$   \\
& FFA-LoRA \cite{sun2024improving}  &   $0.834\pm 0.009$   \\ 
& RoLoRA \cite{chen2025robust}  & $0.722\pm 0.215$   \\ 
% \rowcolor{highlight} % Highlights the row below
& \cellcolor{highlight}{FedRot-LoRA (Ours)}  & \cellcolor{highlight}$\mathbf{0.865\pm 0.002}$  \\ 
\bottomrule
\multirow{5}{*}{\textcolor{blue}{24}} & \textcolor{blue}{FedIT} \cite{zhang2024towards}   &  $0.857\pm 0.001$    \\ 
& \textcolor{blue}{FlexLoRA} \cite{bai2024federated}  &   $0.864\pm 0.009$   \\
& \textcolor{blue}{FFA-LoRA} \cite{sun2024improving}  &   $0.844\pm 0.001$   \\ 
& \textcolor{blue}{RoLoRA} \cite{chen2025robust}  & $0.746\pm 0.274$   \\ 
% \rowcolor{highlight} % Highlights the row below
& \cellcolor{highlight}{\textcolor{blue}{FedRot-LoRA (Ours)}}  & \cellcolor{highlight}$\mathbf{0.872\pm 0.001}$  \\ 
\bottomrule
\end{tabular}}
\vspace{-0.1 in}
\end{table}

\subsection{Language Understanding Tasks}
Table~\ref{tab:results} reports RoBERTa-Large performance on five GLUE natural language understanding tasks (SST-2, QNLI, MNLI, QQP, RTE), where we report mean test accuracy and standard deviation over three random seeds.
All methods use LoRA rank $r=4$, and we adopt the non-IID data partition scheme from~\cite{li2022federated} using a Dirichlet distribution with concentration parameter $h=0.5$.
We consider three FL scales, with $N\in\{3,10,50\}$ clients.
Under $N=3$, FedRot-LoRA achieves the highest average accuracy ($0.8932$) among all methods. This improvement is consistent with the substantial reduction in aggregation error achieved by FedRot-LoRA (Table~\ref{tab:error}).
The performance gains are more pronounced on challenging tasks such as MNLI and RTE, where limited data and label noise can amplify aggregation mismatch across clients.
In contrast, for relatively easy binary tasks such as SST-2 and QNLI, multiple methods approach similar performance ceilings, consistent with reduced sensitivity to aggregation effects in these settings.
When scaling to $N=10$, most baselines degrade, particularly on small-data tasks such as RTE, reflecting the increased difficulty of aggregating heterogeneous client updates.
FedRot-LoRA remains the most accurate method on average ($0.8818$) and improves training stability, reducing the averaged standard deviation across tasks to $0.0034$ compared to $0.0058$ for the strongest baseline RoLoRA.
At the larger scale of $N=50$, the advantage of FedRot-LoRA becomes more pronounced: it achieves the best performance on all five tasks and improves the average accuracy to $0.8734$, compared with $0.8244$ for RoLoRA and $0.7680$ for FedIT.
This suggests that rotational alignment becomes increasingly beneficial as client heterogeneity and aggregation difficulty grow.

% FedRot-LoRA preserves the expressivity of training \textit{both} LoRA factors while addressing a key failure mode of naive factor averaging introduced by the rotational invariance. FedRot-LoRA performs a lightweight client-side Orthogonal Procrustes alignment before aggregation, preserving the update via $(B_iR_i)(R_i^\top A_i)=B_iA_i$. This avoids the restricted update space of single-/alternating-factor baselines (e.g., FFA-LoRA, RoLoRA) and explains the stability gap to FedIT (without rotation), which remains sensitive to misalignment under heterogeneity.
%The computational overhead of FedRot-LoRA is negligible since the SVD is only on an $r\times r$ matrix.

\textbf{(1) Effect of Rank.}
% Table~\ref{tab:ranks} reports MNLI accuracy for different LoRA ranks $r \in \{4, 8, 16\}$ under a non-IID partition with Dirichlet concentration parameter $h=0.5$ and $N=3$ clients. FedRot-LoRA achieves the best performance across all ranks, with consistently low standard deviation.
% As the LoRA rank increases, most baselines exhibit performance degradation. This is due to increased difficulty of aggregating higher-dimensional low-rank factors under client heterogeneity. The degradation is most severe for RoLoRA, which drops by 16.8\% at $r=16$, suggesting its reduced robustness to higher-rank updates under alternating optimization.
% FedRot-LoRA maintains stable performance as rank increases, indicating improved robustness to higher-dimensional LoRA adaptations.

Table~\ref{tab:ranks} reports MNLI accuracy for different LoRA ranks $r \in \{2,4,8,16,24\}$ under a non-IID partition with Dirichlet concentration parameter $h=0.5$ and $N=3$ clients.
FedRot-LoRA achieves the best performance across all ranks, with consistently low standard deviation.
At the low-rank setting $r=2$, FedRot-LoRA still performs strongly, indicating that rotational alignment does not hurt optimization even when the adaptation subspace is highly constrained.
As the rank increases, aggregation becomes more challenging because clients may span higher-dimensional and more misaligned latent subspaces.
This particularly affects RoLoRA, whose performance drops sharply at $r=16$ and $r=24$, with large variance across runs.
In contrast, FedRot-LoRA remains stable at both moderate and high ranks, achieving $0.865$ at $r=16$ and $0.872$ at $r=24$ with very small standard deviation.
These results suggest that FedRot-LoRA is robust across both capacity-limited low-rank settings and higher-rank settings.

\begin{table}[t]
\centering
\caption{MNLI accuracy under varying levels of data heterogeneity ($h \in \{100, 1, 0.5\}$). 
FedRot-LoRA outperforms all baselines across settings, highlighting its robustness to both homogeneous and heterogeneous data distributions.
}
\label{tab:noniid}
\renewcommand{\arraystretch}{1.0} % Increases row height for readability
\resizebox{0.99\linewidth}{!}{
\begin{tabular}{@{}lccc@{}}
\toprule
\textbf{Methods}      & $h=100$                     & $h=1$                      & $h=0.5$                        \\ \midrule
FedIT  &     $0.868\pm 0.003$                      & $0.864\pm 0.015$          & $0.866\pm 0.001$                \\
\textcolor{blue}{FlexLoRA}  &     $0.871\pm 0.001$                      & $0.839\pm 0.017$          & $0.845\pm 0.008$               \\
FFA-LoRA  &      $0.869\pm 0.000$                     & $0.856\pm 0.003$          & $0.862\pm 0.001$               \\
RoLoRA  &      $0.767\pm 0.158$                     & $0.780\pm 0.140$          & $0.868\pm 0.005$               \\
\rowcolor{highlight}
FedRot-LoRA  & $\mathbf{0.884\pm 0.001}$ & $\mathbf{0.873\pm 0.001}$ & $\mathbf{0.876 \pm 0.002}$     \\ \bottomrule
\end{tabular}}
\vspace{-0.1 in}
\end{table}

\textbf{(2) Effect of Data Heterogeneity.}
Table~\ref{tab:noniid} reports MNLI performance under varying levels of data heterogeneity, controlled by the Dirichlet concentration parameter $h \in \{100, 1, 0.5\}$; larger values correspond to more uniform (IID) client data, smaller values induce stronger non-IID skew. In the near-IID setting ($h = 100$), FedRot-LoRA achieves the highest accuracy (88.4\%) with low standard deviation, outperforming all baselines. Most methods perform relatively well in this regime, as aggregation is less sensitive to client
misalignment in this setting, though FedRot-LoRA retains a consistent advantage. As heterogeneity increases ($h = 1$ and $0.5$), performance degrades across all methods. FedRot-LoRA maintains the strongest performance in all settings and exhibits smaller standard deviation compared to competing approaches, indicating improved robustness to client drift under increasingly non-IID conditions.

\subsection{Natural Language Generation}
\begin{table}[t]
\centering
\caption{Performance on generative tasks. We report pass@1 scores for code generation (evaluated on HumanEval) and exact match accuracy for mathematical reasoning (evaluated on GSM8K). FedRot-LoRA outperforms all baselines on both tasks.
}
\label{tab:generation}
\renewcommand{\arraystretch}{1.0} % Increases row height for readability
\resizebox{0.99\linewidth}{!}{
\begin{tabular}{@{}lcc@{}}
\toprule
\textbf{Methods}     & \textbf{HumanEval (pass@1)} & \textbf{GSM8K (acc)}            \\ \midrule
FedIT \cite{zhang2024towards}      &     $0.2877\pm0.017$          & $0.4293\pm0.018$ \\
\textcolor{blue}{FlexLoRA} \cite{bai2024federated}      &     $0.2930\pm0.048$          & $0.3515\pm0.126$ \\
FFA-LoRA \cite{sun2024improving}  &   $0.3851\pm0.009$            & $0.4361\pm0.001$ \\
RoLoRA \cite{chen2025robust}     &     $0.2951\pm0.081$          & $0.3444\pm0.172$ \\
\rowcolor{highlight}
FedRot-LoRA (Ours) &      $\mathbf{0.4088\pm0.012}$         & $\mathbf{0.4437\pm0.009}$ \\ \bottomrule
\end{tabular}}
\vspace{-0.1 in}
\end{table}

We evaluate natural language generation on GSM8K (mathematical reasoning) and HumanEval (code generation, trained on CodeSearchNet) using Llama 3-8B.
Table~\ref{tab:generation} reports the final performance metrics (exact match accuracy for GSM8K and pass@1 for HumanEval).
FedRot-LoRA achieves the strongest performance on both tasks, reaching 44.37\% on GSM8K and 40.88\% on HumanEval, consistently outperforming all baselines. RoLoRA exhibits substantially higher variance across runs, indicating reduced training stability in these generative settings. In contrast, FedRot-LoRA maintains lower standard deviation while achieving higher accuracy, suggesting improved robustness under federated fine-tuning for complex generative tasks. Overall, these results indicate that FedRot-LoRA extends effectively beyond classification to generative language tasks.

\subsection{Ablation Study}

\textbf{(1) With vs. Without Rotational Alignment.}
FedIT serves as a non-rotational baseline in which clients aggregate LoRA factors via naive factor-wise averaging. As shown in Tables \ref{tab:results}-\ref{tab:generation}, FedRot-LoRA outperforms FedIT across all tasks and LoRA ranks (higher accuracy, lower standard deviation), indicating the benefit of aligning latent subspaces prior to aggregation. We further evaluate a \textit{random rotation} baseline, in which clients apply arbitrary orthogonal transformations before aggregation. This variant achieves an MNLI accuracy of $0.3181$ (see Appendix~\ref{app:exp-random}), demonstrating that meaningful, rather than random, alignment is required.

\begin{figure}[t]
    \centering
    % First Subfigure: MNLI
    \begin{subfigure}{0.495\linewidth}
        \centering
        \includegraphics[width=\linewidth]{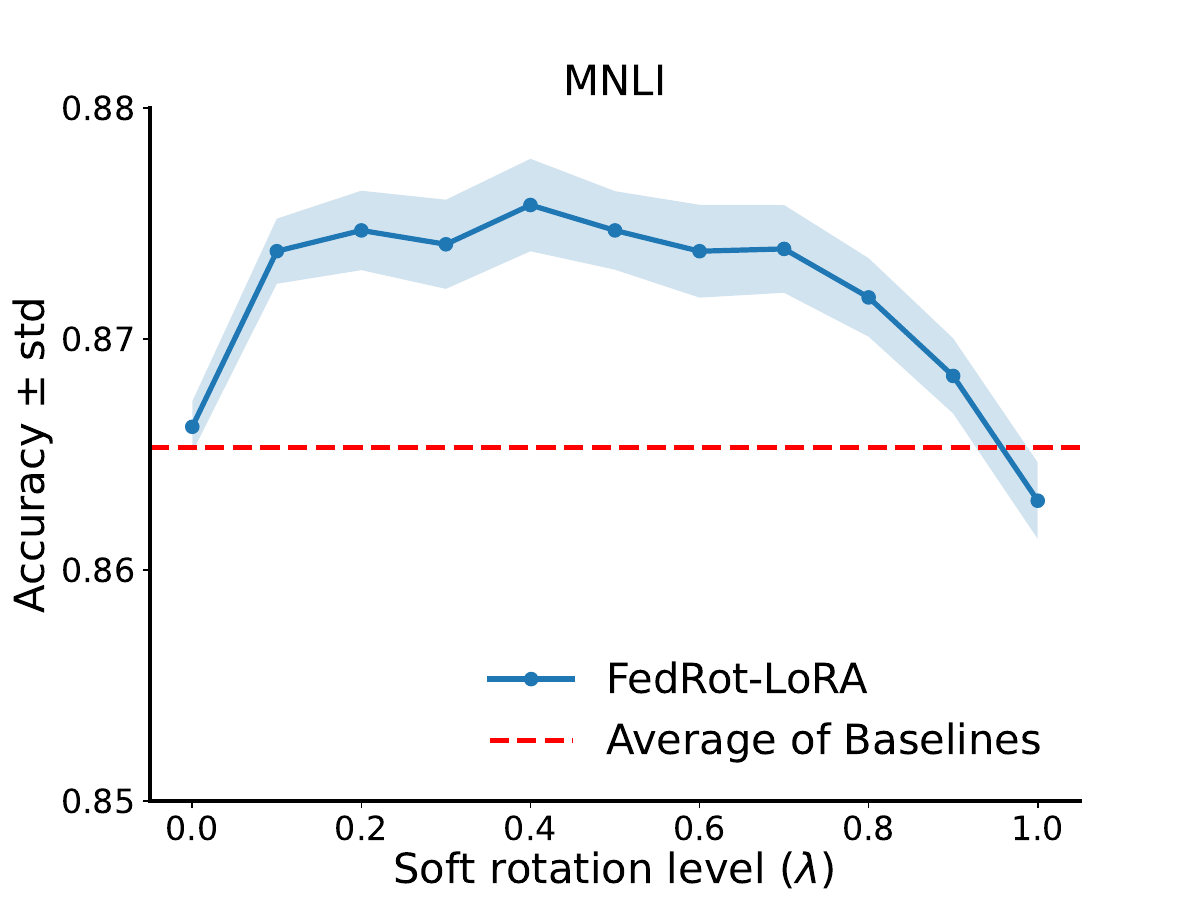}
        \caption{MNLI Performance}
        \label{fig:soft_mnli}
    \end{subfigure}
    \hfill % Adds horizontal space between the two subfigures
    % Second Subfigure: QQP
    \begin{subfigure}{0.495\linewidth}
        \centering
        \includegraphics[width=\linewidth]{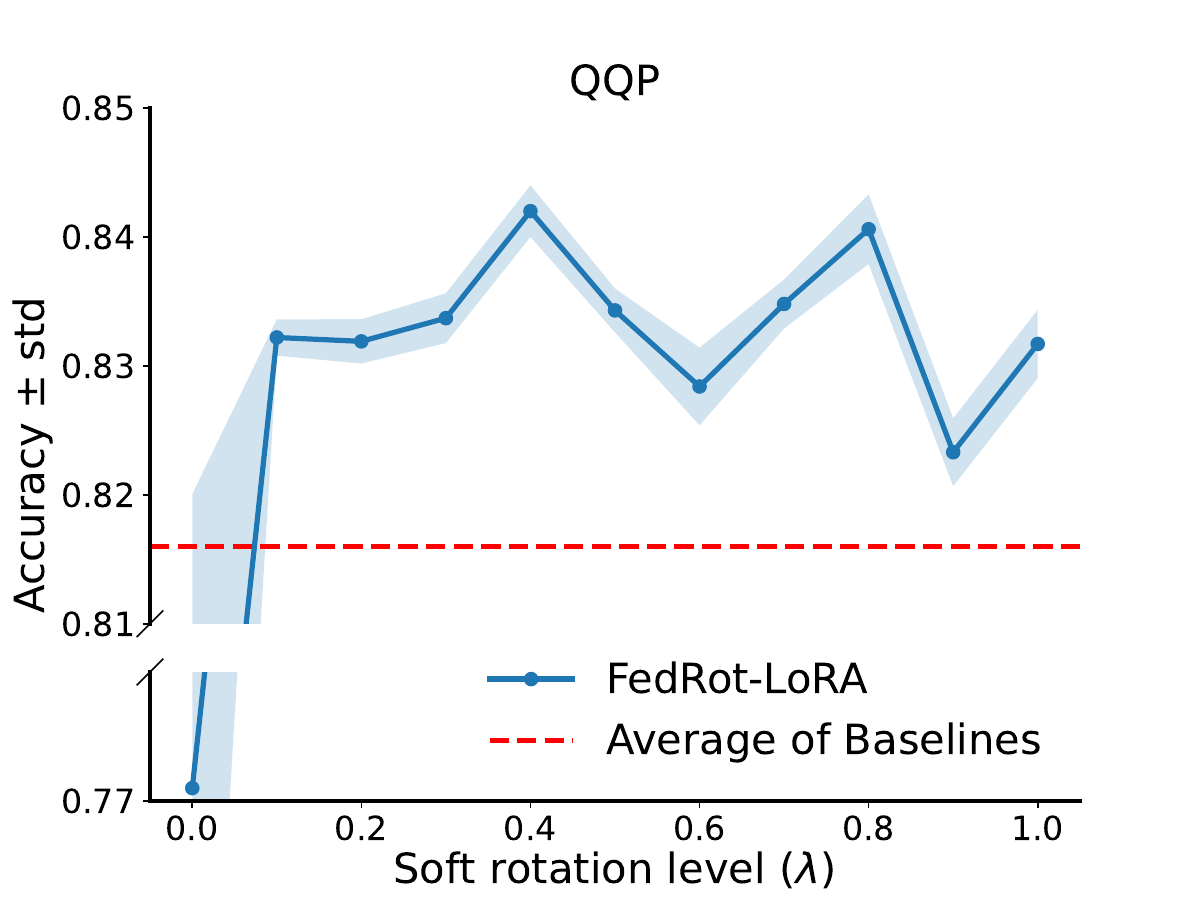}
        \caption{QQP Performance}
        \label{fig:soft_qqp}
    \end{subfigure}
    
    \caption{Effect of soft rotation level $\lambda$ in FedRot-LoRA on MNLI and QQP. Red dashed line denotes the  baseline's average accuracy.}
    \label{fig:soft_rotation_analysis}
\vspace{-0.1 in}
\end{figure}

\textbf{(2) Effect of Soft Rotation Level $\lambda$.}
We evaluate FedRot-LoRA with $\lambda \in \{0, 0.1,0.2, \dots, 0.9, 1.0\}$ on MNLI and QQP (Fig. \ref{fig:soft_rotation_analysis}). Across both tasks, a wide range of $\lambda$ values outperforms the average performance of the baseline methods. Performance is highest for intermediate values of $\lambda$, while too small or too large values lead to reduced accuracy. On MNLI, performance degrades when $\lambda$ is set too high (e.g., $\lambda = 1.0$), suggesting that aggressive alignment can be detrimental when the global reference is noisy, particularly in early rounds.
A similar trend is observed on QQP, where excessive alignment also leads to lower accuracy. These results indicate that soft rotation provides an effective way to control alignment strength, enabling gradual subspace alignment without destabilizing training.

\begin{table}[t]
\centering
\caption{Ablation comparing alternating and single-factor alignment. FedRot-LoRA performs best, while single-factor alignment degrades accuracy, especially when aligning $B$ alone.
}
\label{tab:AorB}
\renewcommand{\arraystretch}{1.0} % Increases row height for readability
\resizebox{0.99\linewidth}{!}{
\begin{tabular}{@{}lccc@{}}
\toprule
\textbf{Methods}      & \textbf{SST-2}                     & \textbf{QNLI}                      & \textbf{MNLI}                        \\ \midrule
Align A Only &     $0.883\pm 0.015$                      & $0.913\pm0.008$          & $0.861\pm 0.001$                \\
Align B Only &      $0.879\pm 0.015$                     & $0.806\pm 0.149$          & $0.862\pm 0.017$               \\
\rowcolor{highlight}
FedRot-LoRA  & $\mathbf{0.954\pm 0.001}$ & $\mathbf{0.926\pm 0.002}$ & $\mathbf{0.876 \pm 0.002}$     \\ \bottomrule
\end{tabular}}
\vspace{-0.1 in}
\end{table}

\textbf{(3) Effect of Alternating Rotation Targets.} To test the importance of aligning both LoRA factors, we compare FedRot-LoRA with two variants that apply rotational alignment only to $A$ or only to $B$ throughout training. As shown in Table~\ref{tab:AorB}, both variants underperform the full method. The performance drop is most pronounced when aligning only $B$. One possible explanation is that $B$ is initialized with a small norm, which may limit the effectiveness of alignment in early training stages. In contrast, alternating the alignment target between $A$ and $B$ consistently yields better performance. These results indicate that alternating alignment across both LoRA factors is beneficial, and that restricting alignment to a single factor degrades performance.

Additional experimental results are deferred to Appendix \ref{app:exp}.

% \textbf{Impact of Rotational Alignment.} The standard FedIT baseline performs poorly, particularly at higher learning rates ($0.532$ at $lr=2\text{e-}2$), validating our hypothesis that rotational noise destabilizes aggregation when both matrices are naively averaged. While parameter-freezing methods (FFA-LoRA) and alternating optimization (RoLoRA) mitigate this stability issue, they are often outpaced by FedLoRA2. FedLoRA2 achieves an accuracy of \textbf{87.60\%}, surpassing FFA-LoRA (86.20\%) and RoLoRA (85.40\%). This demonstrates that explicitly resolving rotational ambiguity allows for more aggressive learning rates and efficient convergence without sacrificing model expressivity.

\section{Conclusion}

We identify rotational noise, caused by the rotational invariance of low-rank factorization, as a key source of aggregation error in federated LoRA. To address this issue, we propose FedRot-LoRA, which aligns local LoRA updates via client-side rotations prior to aggregation while preserving semantic equivalence. We provide a theoretical analysis showing that rotational alignment yields a strictly tighter bound on aggregation error induced by factor-wise averaging. Extensive experiments show that FedRot-LoRA outperforms existing federated LoRA methods across different client scales, ranks, and data heterogeneity levels.

\section*{Acknowledgements}
This work was funded in part by National Science Foundation (NSF) Grant 2148224.

\section*{Impact Statement}
This work improves federated fine-tuning of large language models by mitigating aggregation misalignment through a lightweight rotational alignment mechanism.
This can facilitate the practical deployment of federated parameter-efficient fine-tuning in privacy-sensitive and resource-constrained settings. 
The work improves aggregation fidelity and training stability in federated learning and does not introduce additional societal risks beyond those inherent to federated learning and large language models.

\bibliography{ref}
\bibliographystyle{icml2026}

%%%%%%%%%%%%%%%%%%%%%%%%%%%%%%%%%%%%%%%%%%%%%%%%%%%%%%%%%%%%%%%%%%%%%%%%%%%%%%%
%%%%%%%%%%%%%%%%%%%%%%%%%%%%%%%%%%%%%%%%%%%%%%%%%%%%%%%%%%%%%%%%%%%%%%%%%%%%%%%
% APPENDIX
%%%%%%%%%%%%%%%%%%%%%%%%%%%%%%%%%%%%%%%%%%%%%%%%%%%%%%%%%%%%%%%%%%%%%%%%%%%%%%%
%%%%%%%%%%%%%%%%%%%%%%%%%%%%%%%%%%%%%%%%%%%%%%%%%%%%%%%%%%%%%%%%%%%%%%%%%%%%%%%
\newpage
\appendix
\onecolumn
\section{Proofs}
\subsection{Proof of Theorem \ref{theorem:converge}} \label{app:convergence}
\begin{proof}
We analyze the convergence of the global model $W^t$ under the proposed FedRot-LoRA. We denote the aggregated LoRA matrices at round $t$ as $\bar{A}^t = \frac{1}{N}\sum_{i=1}^N A_i^t$ and $\bar{B}^t = \frac{1}{N}\sum_{i=1}^N B_i^t$. The global model is constructed as $W^t = W_0 + \bar{B}^t \bar{A}^t$.

\noindent \textbf{1. Definitions and Update Rules} \\
Let the ``ideal'' global model update $W_{\text{ideal}}^{t+1}$ be the result of averaging the local weight updates directly, whereas the actual global model $W^{t+1}$ is formed by multiplying the averaged LoRA matrices.
\begin{align}
    W_{\text{ideal}}^{t+1} &= W_0 + \frac{1}{N} \sum_{i=1}^N B_i^{t+1} A_i^{t+1} \\
    W^{t+1} &= W_0 + \left(\frac{1}{N} \sum_{i=1}^N B_i^{t+1}\right) \left(\frac{1}{N} \sum_{i=1}^N A_i^{t+1}\right)
\end{align}
We define the aggregation error $E^{t+1}$ as the difference between the actual and ideal updates:
\begin{equation}
    E^{t+1} = W^{t+1} - W_{\text{ideal}}^{t+1} = -\frac{1}{2N^2} \sum_{i=1}^N \sum_{j=1}^N (B_i^{t+1} - B_j^{t+1})(A_i^{t+1} - A_j^{t+1})
\end{equation}
The global updates for the LoRA adapters follow the gradient descent rule with learning rate $\eta$. 
% For each client $i$, during communication round $t$, the local LoRA factors
% are updated over $K$ local epochs as
% \begin{align}
% A_i^{t,k+1} &= A_i^{t,k} - \eta\, G_{A,i}^{t,k}, \\
% B_i^{t,k+1} &= B_i^{t,k} - \eta\, G_{B,i}^{t,k},
% \end{align}
% where $G_{A,i}^{t,k}$ and $G_{B,i}^{t,k}$ denote stochastic gradients with respect
% to $A$ and $B$ at local epoch $k$.

\noindent \textbf{2. Smoothness Analysis via Intermediate Step} \\
Since the objective function $f$ is $L$-smooth, we have:
\begin{equation}
    f(Y) \le f(X) + \langle \nabla f(X), Y - X \rangle + \frac{L}{2} \|Y - X\|^2
\end{equation}
where $\|\cdot\|$ indicates Frobenius norm.
We decompose the transition from $W^t$ to $W^{t+1}$ into two steps: $W^t \to W_{\text{ideal}}^{t+1}$ and $W_{\text{ideal}}^{t+1} \to W^{t+1}$.

\noindent \textit{Step 2a: Gap between $W^{t+1}$ and $W_{\text{ideal}}^{t+1}$} \\
Applying smoothness between $W^{t+1}$ and $W_{\text{ideal}}^{t+1}$:
\begin{align}
    f(W^{t+1}) &\le f(W_{\text{ideal}}^{t+1}) + \langle \nabla f(W_{\text{ideal}}^{t+1}), W^{t+1} - W_{\text{ideal}}^{t+1} \rangle + \frac{L}{2} \|W^{t+1} - W_{\text{ideal}}^{t+1}\|^2 \nonumber \\
    &= f(W_{\text{ideal}}^{t+1}) + \langle \nabla f(W_{\text{ideal}}^{t+1}), E^{t+1} \rangle + \frac{L}{2} \|E^{t+1}\|^2
\end{align}
Using Young's Inequality $\langle x, y \rangle \le \frac{1}{2\eta^2} \|x\|^2 + \frac{\eta^2}{2} \|y\|^2$ on the inner product term:
\begin{equation}
    \langle E^{t+1}, \nabla f(W_{\text{ideal}}^{t+1}) \rangle \le \frac{1}{2\eta^2} \|E^{t+1}\|^2 + \frac{\eta^2}{2} \|\nabla f(W_{\text{ideal}}^{t+1})\|^2 
    \le \frac{1}{2\eta^2} \|E^{t+1}\|^2 + \frac{\eta^2}{2}  G_W^2 
    \label{eq:error2}
\end{equation}
We have:
\begin{equation} \label{eq:step1}
    f(W^{t+1}) \le f(W_{\text{ideal}}^{t+1}) + \left(\frac{L}{2} + \frac{1}{2\eta^2}\right) \|E^{t+1}\|^2 + \frac{\eta^2}{2} G_W^2
\end{equation}

\noindent \textit{Step 2b: Gap between $W_{\text{ideal}}^{t+1}$ and $W^t$} \\
Applying smoothness between $W_{\text{ideal}}^{t+1}$ and $W^t$:
\begin{align}
    f(W_{\text{ideal}}^{t+1}) &\le f(W^t) + \langle \nabla f(W^t), W_{\text{ideal}}^{t+1} - W^t \rangle + \frac{L}{2} \|W_{\text{ideal}}^{t+1} - W^t\|^2\\
    &\le 
    f(W^t) + \langle \nabla f(W^t), W_{\text{ideal}}^{t+1} -W^{t+1}+W^{t+1}- W^t \rangle + \frac{L}{2} \|W_{\text{ideal}}^{t+1} -W^{t+1}+W^{t+1} - W^t\|^2\\
    &=
    f(W^t) - \langle \nabla f(W^t), E^{t+1} \rangle +\langle \nabla f(W^t), W^{t+1}- W^t \rangle + L (\|E^{t+1}\|^2+\|W^{t+1} - W^t\|^2)
\end{align}
Following similar steps from \eqref{eq:error2}: $- \langle \nabla f(W^t), E^{t+1} \rangle\leq \frac{1}{2\eta^2} \|E^{t+1}\|^2 + \frac{\eta^2}{2} G_W^2$. Then we have:
\begin{align}
    f(W_{\text{ideal}}^{t+1}) &\le 
    f(W^t) - \langle \nabla f(W^t), E^{t+1} \rangle +\langle \nabla f(W^t), W^{t+1}- W^t \rangle + L (\|E^{t+1}\|^2+\|W^{t+1} - W^t\|^2)\\
    &\le
    f(W^t) + \left(\frac{1}{2\eta^2} \|E^{t+1}\|^2 + \frac{\eta^2}{2} G_W^2\right) +\langle \nabla f(W^t), W^{t+1}- W^t \rangle + L (\|E^{t+1}\|^2+\|W^{t+1} - W^t\|^2) \label{eq:idealtoWt}
\end{align}

\noindent \textit{Step 2c: Gap between $W^{t+1}$ and $W^t$} \\
Substituting \eqref{eq:idealtoWt}  into \eqref{eq:step1}:
\begin{align} 
f(W^{t+1}) &\le f(W_{\text{ideal}}^{t+1}) + \left(\frac{L}{2} + \frac{1}{2\eta^2}\right) \|E^{t+1}\|^2 + \frac{\eta^2}{2} G_W^2\\
&\le f(W^t)+(\frac{3L}{2}+\frac{1}{\eta^2})  \|E^{t+1}\|^2+
\langle \nabla f(W^t), W^{t+1}- W^t \rangle + L\|W^{t+1} - W^t\|^2 +\eta^2 G_W^2
\label{eq:combined_inequality}
\end{align}

% For each client $i$, during communication round $t$, the local LoRA factors
% are updated over $K$ local epochs as
% \begin{align}
% A_i^{t,k+1} = A_i^{t,k} - \eta\, G_{A,i}^{t,k}, \quad
% B_i^{t,k+1} = B_i^{t,k} - \eta\, G_{B,i}^{t,k}.
% \label{eq:rigorous_aggregated_grad}
% \end{align}
% where $G_{A,i}^{t,k}$ and $G_{B,i}^{t,k}$ denote stochastic gradients with respect
% to $A$ and $B$ at local epoch $k$.
% Here we bound the client drift for $A$ caused by multiple local epochs. 

% First, we bound the divergence of the local model parameters from the global model. For any client $i$ and local epoch $k'$, the parameter drift is bounded by the accumulated updates:
% \begin{align}
% \|A_i^{t,k'} - \bar{A}^{t-1}\|^2 &= \left\| \eta \sum_{k=0}^{k'-1} G_{A,i}^{t,k} \right\|^2 
% \le \eta^2 k' \sum_{k=0}^{k'-1} \|G_{A,i}^{t,k}\|^2 \le \eta^2 (k')^2 G^2,
% \label{eq:param_drift}
% \end{align}
% Using the $L$-smoothness of $f$ (Assumption \ref{assumption:L}), the gradient discrepancy is bounded by:
% \begin{align}
% \mathbb{E}\|\nabla f_i(W_i^{t,k}) - \nabla f(W^t)\|^2 \le L^2 \mathbb{E}\|W_i^{t,k} - W^t\|^2 \le L^2 \eta^2 k^2 G^2.
% \label{eq:grad_drift_bound}
% \end{align}
% $B$ terms can be bounded similarly. 

The change in global weights is:
\begin{align}
    W^{t+1} - W^t &= \bar{B}^{t+1} \bar{A}^{t+1} - \bar{B}^t\bar{A}^t \nonumber \\
    &= \left(\bar{B}^t - \eta \frac{1}{N}\sum_{i=1}^N \nabla_Bf_i(W^t,\xi_i)\right)\left(\bar{A}^t - \eta \frac{1}{N}\sum_{i=1}^N \nabla_Af_i(W^t,\xi_i)\right) - \bar{B}^t\bar{A}^t \nonumber \\
    &\text{Define $G_A^t=\frac{1}{N}\sum_{i=1}^N \nabla_Af_i(W^t,\xi_i)$ and $G_B^t=\frac{1}{N}\sum_{i=1}^N \nabla_Bf_i(W^t,\xi_i)$,}\\
    &= (\bar{B}^t\bar{A}^t - \eta \bar{B}^t G_A^t - \eta G_B^t \bar{A}^t + \eta^2 G_B^t G_A^t) - \bar{B}^t\bar{A}^t \nonumber \\
    &= \underbrace{-\eta (\bar{B}^t G_A^t + G_B^t \bar{A}^t)}_{\text{First Order Term}} + \underbrace{\eta^2 G_B^t G_A^t}_{\text{Second Order Term}}
\end{align}
% \begin{align}
%     W^{t+1} - W^t &= \bar{B}^{t+1} \bar{A}^{t+1} - \bar{B}^t\bar{A}^t \nonumber \\
%     &= \left(\bar{B}^t - \eta \frac{1}{N}\sum_{i=1}^N \sum_{k=0}^{K-1}G_{B,i}^{t,k}\right)\left(\bar{A}^t - \frac{1}{N}\sum_{i=1}^N \sum_{k=0}^{K-1}G_{A,i}^{t,k}\right) - \bar{B}^t\bar{A}^t \nonumber \\
%     &\text{Define $G_B=\frac{1}{N}\sum_{i=1}^N \sum_{k=0}^{K-1}G_{B,i}^{t,k}$ and $G_A=\frac{1}{N}\sum_{i=1}^N \sum_{k=0}^{K-1}G_{A,i}^{t,k}$,}\\
%     &= (\bar{B}^t\bar{A}^t - \eta \bar{B}^t G_A - \eta G_B \bar{A}^t + \eta^2 G_B G_A) - \bar{B}^t\bar{A}^t \nonumber \\
%     &= \underbrace{-\eta (\bar{B}^t G_A + G_B \bar{A}^t)}_{\text{First Order Term}} + \underbrace{\eta^2 G_B G_A}_{\text{Second Order Term}}
% \end{align}

Substituting this expansion into the inner product term $\langle W^{t+1} - W^t, \nabla f(W^t) \rangle$:
\begin{align}
    \langle W^{t+1} - W^t, \nabla f(W^t) \rangle &= \langle -\eta (\bar{B}^t G_A^t + G_B^t \bar{A}^t) + \eta^2 G_B^t G_A^t, \nabla f(W^t) \rangle \nonumber \\
    &= -\eta \underbrace{\langle \bar{B}^t G_A^t, \nabla f(W^t) \rangle}_{\text{Term I}} - \eta \underbrace{\langle G_B^t \bar{A}^t, \nabla f(W^t) \rangle}_{\text{Term II}} + \eta^2 \langle G_B^t G_A^t, \nabla f(W^t) \rangle
\end{align}
We apply the chain rule properties of matrix calculus to simplify Terms I and II. Note that $\nabla_A f(W) = (\bar{B}^t)^T \nabla_W f(W)$ and $\nabla_B f(W) = \nabla_W f(W) (\bar{A}^t)^T$.
\begin{itemize}
    \item \textbf{Term I:} Using the trace property $\langle X, Y \rangle = \text{Tr}(X^T Y)$:
    \begin{equation}
        \langle \bar{B}^t G_A^t, \nabla f(W^t) \rangle = \langle G_A^t, (\bar{B}^t)^T \nabla f(W^t) \rangle = \langle G_A^t, \nabla_A f(W^t) \rangle
    \end{equation}
    \item \textbf{Term II:} Similarly:
    \begin{equation}
        \langle G_B^t \bar{A}^t, \nabla f(W^t) \rangle = \langle G_B^t, \nabla f(W^t) (\bar{A}^t)^T \rangle = \langle G_B^t, \nabla_B f(W^t) \rangle
    \end{equation}
\end{itemize}
We bound the higher-order inner product term using the Cauchy-Schwarz inequality and assuming bounded gradients:
\begin{align}
    \langle \eta^2 G_B^t G_A^t, \nabla f(W^t) \rangle &\le \eta^2 \|G_B^t G_A^t\| \|\nabla f(W^t)\| \\
    &\le \eta^2 G_W \|G_B^t G_A^t\|\\
    &\le \eta^2 G_W \|G_B^t\|\| G_A^t\| \\
    &\le \eta^2 G_W G_B G_A
\end{align}

We bound $\|W^{t+1} - W^t\|^2$:
\begin{align}
    \|W^{t+1} - W^t\|^2 &= \|-\eta (\bar{B}^t G_A^t + G_B^t \bar{A}^t) + \eta^2 G_B^t G_A^t\|^2 \nonumber \\
    &\le 2\eta^2 \|\bar{B}^t G_A^t + G_B^t \bar{A}^t\|^2 + 2\eta^4 \|G_B^t G_A^t\|^2 \nonumber \\
    &\le \eta^2 M_1+\eta^4 M_2 \quad (\text{where } M_1,M_2  \text{ are constant combination of bounds})
\end{align}

Substituting these back into \eqref{eq:combined_inequality}:
\begin{align}
    f(W^{t+1}) \le f(W^t) &+ (\frac{3L}{2}+\frac{1}{\eta^2}) \|E^{t+1}\|^2 \nonumber \\
    &- \eta \langle G_A^t, \nabla_A f(W^t) \rangle - \eta \langle G_B^t, \nabla_B f(W^t) \rangle \nonumber \\
    &+\eta^2 G_W^2+ \eta^2 (G_WG_AG_B+M_1)+\eta^4M_2
\end{align}
Taking the expectation $\mathbb{E}$ over the sampling randomness at round $t$, where $\mathbb{E}[G_A^t] = \nabla_A f(W^t)$ and $\mathbb{E}[G_B^t] = \nabla_B f(W^t)$, yields the final recursive step:
\begin{equation}
    \mathbb{E}[f(W^{t+1})] \le \mathbb{E}[f(W^t)] - \eta \mathbb{E}\left[\|\nabla_A f(W^t)\|^2 + \|\nabla_B f(W^t)\|^2\right] + (\frac{3L}{2}+\frac{1}{\eta^2}) \mathbb{E}[\|E^{t+1}\|^2] + O(\eta^2)
\end{equation}
We rearrange the inequality to isolate the gradient norm term on the left-hand side:
\begin{equation}
    \eta \mathbb{E}\left[\|\nabla_A f(W^t)\|^2 + \|\nabla_B f(W^t)\|^2\right] \le \mathbb{E}[f(W^t)] - \mathbb{E}[f(W^{t+1})] + (\frac{3L}{2}+\frac{1}{\eta^2}) \mathbb{E}[\|E^{t+1}\|^2] + O(\eta^2)
\end{equation}
We sum this inequality over the training rounds from $t=0$ to $T-1$ (or equivalently $1$ to $T$, depending on indexing notation; we use $0$ to $T-1$ to match the standard updates):
\begin{align}
    \eta \sum_{t=0}^{T-1} \mathbb{E}\left[\|\nabla_A f(W^t)\|^2 + \|\nabla_B f(W^t)\|^2\right] &\le \sum_{t=0}^{T-1} \left( \mathbb{E}[f(W^t)] - \mathbb{E}[f(W^{t+1})] \right) \nonumber \\
    &\quad + (\frac{3L}{2}+\frac{1}{\eta^2}) \sum_{t=0}^{T-1} \mathbb{E}[\|E^{t+1}\|^2] + \sum_{t=0}^{T-1} O(\eta^2)
\end{align}
The first term on the RHS is a telescoping sum:
\begin{equation}
    \sum_{t=0}^{T-1} \left( \mathbb{E}[f(W^t)] - \mathbb{E}[f(W^{t+1})] \right) = f(W^0) - \mathbb{E}[f(W^T)]
\end{equation}

Let $f^*=\inf_W f(W)$ denote the infimum of the objective. 
Since $f^* \le f(W^T)$, we have $f(W^0) - \mathbb{E}[f(W^T)] \le f(W^0) - f^*$. Substituting this back:
\begin{equation}
    \eta \sum_{t=0}^{T-1} \mathbb{E}\left[\|\nabla_A f(W^t)\|^2 + \|\nabla_B f(W^t)\|^2\right] \le f(W^0) - f^* + (\frac{3L}{2}+\frac{1}{\eta^2}) \sum_{t=0}^{T-1} \mathbb{E}[\|E^{t+1}\|^2] + T \cdot O(\eta^2)
\end{equation}

\noindent \textbf{3. Final Bound} \\
We divide the entire inequality by $T\eta$ to obtain the average gradient norm:
\begin{align}
    \frac{1}{T} \sum_{t=0}^{T-1} \mathbb{E}\left[\|\nabla_A f(W^t)\|^2 + \|\nabla_B f(W^t)\|^2\right] &\le \frac{f(W^0) - f^*}{T\eta} +(\frac{3L}{2T\eta}+\frac{1}{T\eta^3})\sum_{t=0}^{T-1} \mathbb{E}[\|E^{t+1}\|^2] + \frac{T \cdot O(\eta)}{T\eta} \nonumber \\
    &= \frac{f(W^0) - f^*}{T\eta} + \frac{3L\eta^2+2}{2T\eta}\sum_{t=0}^{T-1} \mathbb{E}\left[\frac{\|E^{t+1}\|^2}{\eta^2}\right] + O(\eta)
\end{align}
Using the property that the minimum of a set is less than or equal to its average, i.e.,
\begin{equation}
    \min_{t \in \{0, \dots, T-1\}} \mathbb{E}\left[\|\nabla_A f(W^t)\|^2 + \|\nabla_B f(W^t)\|^2\right] \le \frac{1}{T} \sum_{t=0}^{T-1} \mathbb{E}\left[\|\nabla_A f(W^t)\|^2 + \|\nabla_B f(W^t)\|^2\right]
\end{equation}
We show the convergence: 
\begin{equation}
    \min_{t} \mathbb{E}\left[\|\nabla_A f(W^t)\|^2 + \|\nabla_B f(W^t)\|^2\right] \le \frac{f(W^0) - f^*}{T\eta} + \frac{3L\eta^2+2}{2T\eta}\sum_{t=0}^{T-1} \mathbb{E}\left[\frac{\|E^{t+1}\|^2}{\eta^2}\right] + O(\eta)
\end{equation}
\end{proof}

% \subsection{Proof of Proposition \ref{prop:error_scaling}}
% \begin{proof}
% The proof proceeds by substituting the local update rules into the definition of the aggregation error.

% The error gap is: 
% \begin{equation}
%     E^{t+1} = -\frac{1}{2m^2} \sum_{i=1}^m \sum_{j=1}^m (B_i^{t+1} - B_j^{t+1})(A_i^{t+1} - A_j^{t+1})
% \end{equation}

% Notice that:
% \begin{align}
%     A_i^{t+1} - A_j^{t+1} &= (\bar{A}^t - \eta G_{A,i}^t) - (\bar{A}^t - \eta G_{A,j}^t) = -\eta(G_{A,i}^t - G_{A,j}^t) \\
%     B_i^{t+1} - B_j^{t+1} &= -\eta(G_{B,i}^t - G_{B,j}^t)
% \end{align}

% Substituting these differences back into the error gap:
% \begin{align}
%     E^{t+1} &= -\frac{1}{2m^2} \sum_{i=1}^m \sum_{j=1}^m \left[-\eta(G_{B,i}^t - G_{B,j}^t)\right] \left[-\eta(G_{A,i}^t - G_{A,j}^t)\right] \nonumber \\
%     &= -\eta^2 \left( \frac{1}{2m^2} \sum_{i=1}^m \sum_{j=1}^m (G_{B,i}^t - G_{B,j}^t)(G_{A,i}^t - G_{A,j}^t) \right)\\
%     &\leq \eta^2 C_{grad}
% \end{align}
% \end{proof}

\subsection{Proof of Theorem \ref{thm:odd-round} and Corollary \ref{corollary:lambda}
}\label{app:error_bound}

\begin{lemma}[Soft Rotation Shrinkage]\label{lem:soft_shrink}
Let $R$ be a rotation matrix and $\lambda \in [0,1]$.
Define the soft rotation matrix $R_{\text{soft}}$ as in Eq. \eqref{eq:soft_solution}.
Then
\begin{align}
\|R_{\mathrm{soft}} - I\|_F
\;\le\;
2\lambda \|R - I\|_F .
\label{eq:softshrink}
\end{align}
Consequently, in $A$-alignment rounds, with FedRot-LoRA, we have
\begin{align}
\|R_{i,\mathrm{soft}}^t - I\|_F
\le
2\lambda \|R_i^{t,*} - I\|_F
\le
2\kappa \lambda \|A_i^t - A_{\mathrm{ref}}\|_F
\end{align}
where $\kappa_\lambda = 2\kappa\lambda$. 
Analogous results hold for $B$-alignment rounds.
\end{lemma}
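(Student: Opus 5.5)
The plan is a triangle-inequality argument. It rests on one fact: the projection step in the soft rotation returns a nearest element of $SO(r)$ to the interpolated matrix (or to its transpose, depending on the SVD convention). Throughout, let $R' = (1-\lambda)I + \lambda R$ as in Eq.~\eqref{eq:soft_solution}.

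\textbf{Step 1 (projection property).} I will show that $R_{\mathrm{soft}} = V\,\mathrm{diag}(1,\ldots,1,\det(UV^\top))\,U^\top$, with $R' = U\Sigma V^\top$, is a minimizer of $\|Q - (R')^\top\|_F$ over $Q \in SO(r)$.
\begin{itemize}
\item Expanding the square, $\|Q - (R')^\top\|_F^2 = r + \|R'\|_F^2 - 2\,\mathrm{tr}(Q^\top (R')^\top)$, so it suffices to maximize $\mathrm{tr}(Q R')$ over $SO(r)$.
\item Writing $\mathrm{tr}(Q U\Sigma V^\top) = \mathrm{tr}\big((V^\top Q U)\Sigma\big)$, this is the standard Kabsch/Umeyama problem.
\item Von Neumann's trace inequality, together with the determinant-constrained refinement, shows the maximum is attained at $V^\top Q U = \mathrm{diag}(1,\ldots,1,\det(UV^\top))$. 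This is the same closed form already used for Eq.~\eqref{eq:svd_solution}, so I can cite it.
\item If the SVD is not unique, any resulting $R_{\mathrm{soft}}$ is still a minimizer, which is all that is needed.
\end{itemize}
I expect this step to be the only real obstacle. The difficulty is bookkeeping: the paper's convention returns the projection of $(R')^\top$ rather than of $R'$, and the $\det$ correction must be handled. Both are harmless, because $\|X^\top - I\|_F = \|X - I\|_F$ and $I \in SO(r)$ is a feasible competitor.

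\textbf{Step 2 (triangle inequality).} Since $I\in SO(r)$, the minimizing property from Step 1 gives $\|R_{\mathrm{soft}} - (R')^\top\|_F \le \|I - (R')^\top\|_F$. Hence
\begin{align}
\|R_{\mathrm{soft}} - I\|_F &\le \|R_{\mathrm{soft}} - (R')^\top\|_F + \|(R')^\top - I\|_F \nonumber\\
&\le 2\|(R')^\top - I\|_F = 2\|R' - I\|_F. \nonumber
\end{align}
Because $R' - I = \lambda(R - I)$, the right-hand side equals $2\lambda\|R - I\|_F$, which proves Eq.~\eqref{eq:softshrink}.

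\textbf{Step 3 (FedRot-LoRA consequence).} In $A$-alignment rounds, take $R = R_i^{t,*}$, which is a rotation by construction. Applying Eq.~\eqref{eq:softshrink} and then Assumption~\ref{assump:dispersion} gives
\[
\|R_{i,\mathrm{soft}}^t - I\|_F \le 2\lambda\|R_i^{t,*} - I\|_F \le 2\kappa\lambda\|A_i^t - A_{\mathrm{ref}}\|_F.
\]
This is the claimed bound with constant $\kappa_\lambda = 2\kappa\lambda$. The $B$-alignment rounds are identical, using the $B$-part of Assumption~\ref{assump:dispersion}.
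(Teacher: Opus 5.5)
Your proof is correct and follows the paper's argument. Both identify $R_{\mathrm{soft}}$ as a nearest element of $SO(r)$, use $I$ as a feasible competitor to get a $\lambda\|R-I\|_F$ bound on the projection distance, apply the triangle inequality, and then invoke Assumption~\ref{assump:dispersion}. Your extra care with $(R')^\top$ is warranted: with $R'=U\Sigma V^\top$, the formula $V\,\mathrm{diag}(1,\ldots,1,\det(UV^\top))\,U^\top$ is the projection of $(R')^\top$, not of $R'$ as the paper asserts. Your observation $\|X^\top-I\|_F=\|X-I\|_F$ is exactly what keeps the bound valid under this convention.
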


\begin{proof}
Notice that Eq. \eqref{eq:soft_solution} is the closed-form solution for the problem:
\begin{align}
R_{\mathrm{soft}}
=
&\arg\min_{Q}
\big\|Q - \big((1-\lambda)I + \lambda R\big)\big\|_F^2,\label{eq:soft_rot_opt}\\
&\text{s.t. } Q^\top Q=I, \det(Q)>0.\label{eq:constraint2}
\end{align}

Define
\begin{align}
M(\lambda)=(1-\lambda)I+\lambda R.
\end{align}
By definition, $R_{\mathrm{soft}}$ minimizes $\|Q - M(\lambda)\|_F$ with constraints \eqref{eq:constraint2}.
Notice that $I$ is a feasible solution, then
\begin{align}
\|R_{\mathrm{soft}} - M(\lambda)\|_F \le \|I - M(\lambda)\|_F.
\end{align}
But $I - M(\lambda) = I - [(1-\lambda)I+\lambda R] = \lambda(I-R)$, hence
$\|I - M(\lambda)\|_F = \lambda\|R-I\|_F$.
Using triangle inequality,
\begin{align}
\|R_{\mathrm{soft}}-I\|_F
\le \|R_{\mathrm{soft}}-M(\lambda)\|_F + \|M(\lambda)-I\|_F
\le \lambda\|R-I\|_F + \lambda\|R-I\|_F
= 2\lambda\|R-I\|_F,
\end{align}
proving \eqref{eq:softshrink}. With Assumption \ref{assump:dispersion}, the final inequality holds:
\begin{align}
\|R_{i,\mathrm{soft}}^t - I\|_F
&\le
2\lambda \|R_i^{t,*} - I\|_F\\
&\le
2 \lambda \kappa \|A_i^t - A_{\mathrm{ref}}\|_F
\end{align}
\end{proof}

% \begin{lemma}\label{lemma:BtoA}
% When clients solve the Orthogonal Procrustes problem for matrix $A$, the passive rotation applied to $B$ satisfies:
% $$\langle B_i^t(R-I), G_{B,i}^t \rangle = \langle (R -I)A_i^t, G_{A,i}^t \rangle$$
% \end{lemma}

% \begin{proof}
% We start by expressing the inner product using the trace operator:
% \begin{align*}
% \langle B_i^t(R-I), G_{B,i}^t \rangle &= \mathrm{tr}\left((B_i^t(R-I))^\top G_{B,i}^t\right) \\
% &= \mathrm{tr}\left((R-I)^\top (B_i^t)^\top G_{B,i}^t\right)
% \end{align*}

% By the chain rule of matrix calculus, the gradient with respect to $B$ relates to the weight gradient $G_W = \nabla_W f(W)$ as $G_{B,i}^t = G_W (A_i^t)^\top$. Substituting this:
% \begin{align*}
% \langle B_i^t(R-I), G_{B,i}^t \rangle &= \mathrm{tr}\left((R-I)^\top (B_i^t)^\top G_W (A_i^t)^\top\right)
% \end{align*}

% Using the cyclic property of trace ($\mathrm{tr}(ABC) = \mathrm{tr}(BCA)$):
% \begin{align*}
% &= \mathrm{tr}\left((A_i^t)^\top (R-I)^\top (B_i^t)^\top G_W\right) \\
% &= \mathrm{tr}\left(((R-I)A_i^t)^\top (B_i^t)^\top G_W\right)
% \end{align*}

% Similarly, the gradient with respect to $A$ relates to the weight gradient as $G_{A,i}^t = (B_i^t)^\top G_W$. Substituting this:
% \begin{align*}
% &= \mathrm{tr}\left(((R-I)A_i^t)^\top G_{A,i}^t\right) \\
% &= \langle (R-I)A_i^t, G_{A,i}^t \rangle
% \end{align*}

% Therefore, we have established that:
% $$\langle B_i^t(R-I), G_{B,i}^t \rangle = \langle (R-I)A_i^t, G_{A,i}^t \rangle$$
% \end{proof}

\textbf{Proof of Theorem \ref{thm:odd-round}}
\begin{proof}

The error gap is defined as: 
\begin{align}
E^{t}&=W^t-W_{\text{ideal}}^t\\
&=\left(\frac{1}{N}\sum_{i=1}^N B_i^t\right)\left(\frac{1}{N}\sum_{i=1}^N A_i^t\right)
-\frac{1}{N}\sum_{i=1}^N B_i^t A_i^t\\
&=-\frac{1}{2N^2}\sum_{i=1}^N \sum_{j=1}^N (B_i^t-B_j^t)(A_i^t-A_j^t) & \text{(Apply Lagrange's Identity)}
\end{align}
Thus, the norm of $E^t$ is: 
\begin{align}
\left\|E^t\right\|&=\left\| \frac{1}{2N^2}\sum_{i=1}^N \sum_{j=1}^N (B_i^t-B_j^t)(A_i^t-A_j^t)\right\|\\
&\leq \frac{1}{2N^2}\sum_{i=1}^N\sum_{j=1}^N \left\|(B_i^t-B_j^t)(A_i^t-A_j^t)\right\| &\text{(Triangle Inequality)}\\
&\leq \frac{1}{2N^2}\sum_{i=1}^N\sum_{j=1}^N \left\|B_i^t-B_j^t\right\| \cdot\left\|A_i^t-A_j^t\right\| 
\end{align}
Let $x_{ij}:=\|B_i^{t}-B_j^{t}\|$ and $y_{ij}:=\|A_i^{t}-A_j^{t}\|$.
Cauchy--Schwarz gives
\(
\left(\sum_{i,j}x_{ij}y_{ij}\right)^2
\le
\left(\sum_{i,j}x_{ij}^2\right)\left(\sum_{i,j}y_{ij}^2\right).
\)
Thus
\begin{align}
\|E^{t}\|^2
&\le
\left(\frac{1}{2N^2}\sum_{i,j}\|B_i^{t}-B_j^{t}\|^2\right)
\left(\frac{1}{2N^2}\sum_{i,j}\|A_i^{t}-A_j^{t}\|^2\right)
\end{align}
Apply AM-GM inequality ($\sqrt{xy}\leq \frac{1}{2}( x+y)$), 
\begin{align}
\|E^t\|&\le \frac{1}{2} \left(\frac{1}{2N^2}\sum_{i,j}\|B_i^{t}-B_j^{t}\|^2  + \frac{1}{2N^2}\sum_{i,j}\|A_i^{t}-A_j^{t}\|^2 \right)
\label{eq:sum}
\end{align}

% Fix any reference $A_{\rm ref}$.
For each pair $(i,j)$,
\begin{align}
\|A_i^{t}-A_j^{t}\|^2
=
\|(A_i^{t}-A_{\rm ref})-(A_j^{t}-A_{\rm ref})\|^2
\le
2\|A_i^{t}-A_{\rm ref}\|^2 + 2\|A_j^{t}-A_{\rm ref}\|^2,
\end{align}
Sum over $i,j$ and divide by $2N^2$:
\begin{align}
\frac{1}{2N^2}\sum_{i,j}\|A_i^{t}-A_j^{t}\|^2
\le
\frac{1}{2N^2}\sum_{i,j}\left(2\|A_i^{t}-A_{\rm ref}\|^2+2\|A_j^{t}-A_{\rm ref}\|^2\right).
\end{align}
Since $\sum_{j=1}^N 1 = N$,
\begin{align}
\frac{1}{2N^2}\sum_{i,j}2\|A_i^{t}-A_{\rm ref}\|^2
=
\frac{1}{N}\sum_{i=1}^N\|A_i^{t}-A_{\rm ref}\|^2,
\end{align}
and similarly for the $j$-term. Hence
\begin{align}
\frac{1}{2N^2}\sum_{i,j}\|A_i^{t}-A_j^{t}\|^2
\le
\frac{2}{N}\sum_{i=1}^N\|A_i^{t}-A_{\rm ref}\|^2.\label{eq:A}
\end{align}
Similar inequalities hold for matrix $B$: 
\begin{align}
\frac{1}{2N^2}\sum_{i,j}\|B_i^{t}-B_j^{t}\|^2
\le
\frac{2}{N}\sum_{i=1}^N\|B_i^{t}-B_{\rm ref}\|^2.\label{eq:B}
\end{align}
Apply \eqref{eq:A} and \eqref{eq:B} to \eqref{eq:sum}, 
\begin{align}
\|E^t\|&\le \frac{1}{2} \left(\frac{1 }{2N^2}\sum_{i,j}\|B_i^{t}-B_j^{t}\|^2  + \frac{1}{2N^2}\sum_{i,j}\|A_i^{t}-A_j^{t}\|^2 \right)\\
&\le 
\frac{1}{N}\sum_{i=1}^N\|B_i^{t}-B_{\mathrm{ref}}\|^2  + \frac{1}{N}\sum_{i=1}^N \|A_i^{t}-A_{\mathrm{ref}}\|^2 \label{eq:stepRef} 
\end{align}

% Assume in round $t$, we align matrix $A$. We have: 
% \begin{align}
% \|\tilde{A}_i^t-A_{\mathrm{ref}}\|^2=
% \|R_i^* A_i^t-A_{\mathrm{ref}}\|^2
% \leq \|I A_i^t-A_{\mathrm{ref}}\|^2=\|A_i^t-A_{\mathrm{ref}}\|^2
% \end{align}
With Assumption \ref{assump:gain}, we have:
\begin{align}
\frac{1}{N}\sum_{i=1}^N \|\tilde{A}_i^t-A_{\mathrm{ref}}\|^2
\leq \frac{1}{N} (1-\alpha(\lambda)) \sum_{i=1}^N \|A_i^t-A_{\mathrm{ref}}\|^2 \label{eq:Abound}
\end{align}
Here we bound $\|\tilde{B}_i^t-B_{\mathrm{ref}}\|^2$. 
\begin{align}
\|\tilde{B}_i^t-B_{\mathrm{ref}}\|^2
=\|\tilde{B}_i^t-B_i^t\|^2+\|B_i^t-B_{\mathrm{ref}}\|^2+2\langle\tilde{B}_i^t-B_i^t, B_i^t-B_{\mathrm{ref}} \rangle \label{eq:70}
\end{align}
With Lemma \ref{lem:soft_shrink}: $\|R_{i,\mathrm{soft}}^t - I\|
\le
2\lambda \|R_i^{t,*} - I\|
\le
2\kappa \lambda \|A_i^t - A_{\mathrm{ref}}\|$, then: 
\begin{align}
\|\tilde{B}_i^t-B_i^t\|^2
&\leq \|B_i^t\|^2\|R_{i,\text{soft}}^t-I\|^2\\
&\leq \|B_i^t\|^2 (2\kappa \lambda)^2 \|A_i^t-A_{\mathrm{ref}}\|^2 \label{eq:72}
\end{align}

Here we provide bounds for $\|B_i^t\|$ and $\|A_i^t\|$.
\begin{remark}[Rescaling Does Not Affect Rotational Alignment]\label{rem:rescale}
Due to the scale invariance of LoRA, the factors $A_i^t$ and $B_i^t$ can be rescaled without
changing the effective update $\Delta W_i^t = B_i^t A_i^t$.
In particular, for any $c>0$, one may apply the transformation
\begin{align}
B_i^{t'} = \tfrac{1}{c} B_i^t,
\qquad
A_i^{t'} = c A_i^t,
\qquad
B_{\mathrm{ref}}' = \tfrac{1}{c} B_{\mathrm{ref}},
\qquad
A_{\mathrm{ref}}' = c A_{\mathrm{ref}} .
\end{align}
This rescaling leaves the Procrustes alignment problem unchanged, since both the local
factors and the reference are scaled consistently, and therefore does not affect the
resulting rotation or the aggregated update.
Such rescaling can be reversed after aggregation and thus has no impact on the final
model update.
We introduce this operation solely for theoretical analysis to rule out degenerate
factorizations with extreme magnitudes (e.g., $B \!\to\! 0$, $A \!\to\! \infty$).
In practice, we do not perform explicit rescaling, as the norms of $A$ and $B$ remain
naturally bounded due to the limited number of local fine-tuning steps.
\end{remark}

With Remark \ref{rem:rescale} and Assumption \ref{assump:gauge} ($\|\Delta W_i^t\|\le\tau$), we choose $c=\sqrt{\frac{\|A_i^t\|}{\|B_i^t\|}}$ for rescaling, leading to $\|cB_i^t\|=\|\frac{1}{c}A_i^t\|$. The rescaled factors are bounded: 
\begin{align}
\|A_i^t\|=\|B_i^t\|\leq \sqrt{\tau}.
\end{align}

We further bound Eq.\eqref{eq:72}:
\begin{align}
\|\tilde{B}_i^t-B_i^t\|^2
&\leq \tau (2\kappa \lambda)^2 \|A_i^t-A_{\mathrm{ref}}\|^2
\end{align}

Then we bound the term $2\langle\tilde{B}_i^t-B_i^t, B_i^t-B_{\mathrm{ref}} \rangle$ from Eq. \eqref{eq:70}. 
\begin{align}
2\langle\tilde{B}_i^t-B_i^t, B_i^t-B_{\mathrm{ref}} \rangle
&\leq 2\cdot\|\tilde{B}_i^t-B_i^t\|\cdot \|B_i^t-B_{\mathrm{ref}}\|\\
&\leq 2\cdot\left(2\sqrt{\tau}\kappa\lambda\right)\|A_i^t-A_\mathrm{ref}\|\cdot \|B_i^t-B_{\mathrm{ref}}\|\\
&= 4\sqrt{\tau}\kappa\lambda\|A_i^t-A_\mathrm{ref}\|\cdot \|\eta \nabla_Bf_i(W^{t-1})\|\\
&\leq 4\sqrt{\tau}\kappa\lambda\eta G_B\|A_i^t-A_\mathrm{ref}\|
\end{align}

Thus, Eq. \eqref{eq:70} can be bounded as:
\begin{align}
\|\tilde{B}_i^t-B_{\mathrm{ref}}\|^2
&=\|\tilde{B}_i^t-B_i^t\|^2+\|B_i^t-B_{\mathrm{ref}}\|^2-2\langle (R_{i,\text{soft}}^t-I)A_i^t, \eta G_{A,i}^t \rangle\\
&\leq \|B_i^t-B_{\mathrm{ref}}\|^2+4\tau \kappa^2\lambda^2\|A_i^t-A_\mathrm{ref}\|^2+4\sqrt{\tau}\kappa\lambda\eta G_B \underbrace{\|A_i^t-A_\mathrm{ref}\|}_{\|A_i^t-A_\mathrm{ref}\|\geq \delta_A}\\
&\leq \|B_i^t-B_{\mathrm{ref}}\|^2+\left(4\tau \kappa^2\lambda^2+\frac{4}{\delta_A}\sqrt{\tau}\kappa\lambda\eta G_B \right)\|A_i^t-A_\mathrm{ref}\|^2
\label{eq:Bbound}
\end{align}
Apply \eqref{eq:Abound} and \eqref{eq:Bbound} to \eqref{eq:stepRef},
\begin{align}
\|E^t\|
&\le 
\frac{1}{N}\sum_{i=1}^N\|\tilde{B}_i^{t}-B_{\mathrm{ref}}\|^2  + \frac{1}{N}\sum_{i=1}^N \|\tilde{A}_i^{t}-A_{\mathrm{ref}}\|^2 \quad \text{(Error bound with rotation alignment)} \\
&\leq \frac{1}{N}\sum_{i=1}^N \left(\underbrace{\|B_i^t-B_{\mathrm{ref}}^2\|^2+\|A_i^t-A_{\mathrm{ref}}\|^2}_{\text{error bound without rotation}}
-\left(\alpha(\lambda) - \left(4\tau \kappa^2\lambda^2+\frac{4}{\delta_A}\sqrt{\tau}\kappa\lambda\eta G_B \right) \right)\|A_i^t-A_{\mathrm{ref}}\|^2
\right)
\end{align}
Notice that with Assumption \ref{assump:gain}, we have $\alpha(\lambda)>c_0\lambda$, therefore, 
\begin{align}
\|E^t\|
&\leq \frac{1}{N}\sum_{i=1}^N \left(\underbrace{\|B_i^t-B_{\mathrm{ref}}^2\|^2+\|A_i^t-A_{\mathrm{ref}}\|^2}_{\text{error bound without rotation}}
-\left(c_0\lambda - \left(4\tau \kappa^2\lambda^2+\frac{4}{\delta_A}\sqrt{\tau}\kappa\lambda\eta G_B \right) \right)\|A_i^t-A_{\mathrm{ref}}\|^2
\right)
\end{align}

To ensure positive gain, we require $c_0\lambda - \left(4\tau \kappa^2\lambda^2+\frac{4}{\delta_A}\sqrt{\tau}\kappa\lambda\eta G_B \right)>0$.
Rearranging this term yields the quadratic inequality
\begin{align}
4\kappa^2\tau\,\lambda^2
+ \left(\frac{4}{\delta_A}\sqrt{\tau}\kappa\eta G_B-c_0\right) \lambda < 0.
\label{eq:quad_lambda}
\end{align}
Since $\lambda>0$, a necessary condition for~\eqref{eq:quad_lambda} to admit a feasible solution is
$\frac{4}{\delta_A}\sqrt{\tau}\kappa\eta G_B-c_0<0$, which equivalently requires
\begin{align}
\eta < \frac{c_0\delta_A}{4\sqrt{\tau}\kappa G_B}.
\label{eq:tau_bound}
\end{align}

Under~\eqref{eq:tau_bound}, the set of $\lambda$ values that ensure a positive gain is given by
\begin{align}
0 < \lambda < \min\left\{1,\frac{c_0\delta_A-4\sqrt{\tau}\kappa\eta G_B}{4\kappa^2\tau\delta_A}\right\}.
\label{eq:lambda_range}
\end{align}
This completes the proof.
\end{proof}
\newpage
\section{Additional Experimental Details}\label{app:exp}

\subsection{Comparison of federated LoRA methods}\label{app:baseline_compare}

\begin{table}[h]
\centering
\caption{Comparison of representative federated LoRA methods in terms of aggregation type, aggregation space, communication payload, and additional computation. ``Pure agg.'' indicates whether the method mainly modifies the aggregation rule without introducing personalization, compression/reconstruction, or residual-transmission components. Computational costs are reported per communication round unless otherwise specified.}
\label{tab:baseline_compare}
\footnotesize
\setlength{\tabcolsep}{4pt}
\resizebox{0.99\linewidth}{!}{
\begin{tabular}{l c c c c}
\toprule
\textbf{Method} & \textbf{Pure agg.} & \textbf{Agg. space} & \textbf{Communication} & \textbf{Extra computation} \\
\midrule

FlexLoRA
& Yes
& Full-parameter space
& LoRA factors $B,A$ only
& Full parameter SVD: $O(d^3)$ \\

FedSRD
& No
& Full-parameter space
& Sparsified factors $O(\rho dr), \rho<1$
& Sparsification + SVD: $O(d^3)$\\

FedEx-LoRA
& Yes
& Mixed
$\bar B\,\bar A + \Delta W_{res}$
& Residual matrix $\Delta W_{res}\in\mathbb{R}^{d\times d}$
& $O\!\big(N d^2r\big)$
to form residual matrix \\

LoRA-FAIR
& Yes
& Full-parameter space
& LoRA factors $B,A$ only
&
$O(Td^2r)$
for $T$ steps gradient descent \\

FLoRA
& Yes
& Full-parameter space
& Stacked $B\in\mathbb{R}^{d\times Nr}, A\in\mathbb{R}^{Nr\times d}$
& server stacking
$O\!\big(Nrd\big)$ \\

FedSA-LoRA
& No
& Factor $A$ only (personalized FL)
& Factor $A$ only
& $0$\\
\bottomrule
\end{tabular}}

\end{table}

\subsection{Detailed Experimental Settings}\label{app:exp-settings}

Table \ref{tab:para} summarizes the hyperparameters used in our experiments. 
All results are averaged over 3 random seeds.
We perform a grid search over the learning rate $\eta \in \{5\text{e-}4, 1\text{e-}3, 5\text{e-}3, 2\text{e-}2\}$. 
For FedRot-LoRA, we tune the alignment strength $\lambda \in \{0.2,\,0.4,\,0.6,\,0.8,\,1.0\}$.
We set the number of local epochs to 20 and the number of communication rounds to 250 for all natural language understanding tasks, and to 30 local epochs and 200 communication rounds for generative tasks. 
For each method, hyperparameters are tuned separately using validation, and the best-performing configurations are reported in Table \ref{tab:para}. 
The aggregation error reported in Table \ref{tab:error} is computed from the MNLI results in Table~\ref{tab:results} with $N=3$ clients.
We sum the aggregation error across all LoRA layers and average over all communication rounds and 3 random seeds.
For Figure \ref{fig:soft_rotation_analysis}, we additionally evaluate $\lambda\in\{0.1,\,0.3,\,0.5,\,0.7,\,0.9\}$. 
The results in Table~\ref{tab:AorB} use the same experimental configurations as Table~\ref{tab:results} ($N=3$, MNLI).
For MNLI experiments, all results in this paper correspond to the matched (MNLI-m) setting.
Most of our experiments are conducted under highly non-IID data distributions. 
We perform HumanEval (trained on CodeSearchNet) and GSM8K experiments (Table \ref{tab:generation}) following the default configurations of \cite{kuang2024federatedscope}.
Since LoRA is initialized with $\bar{B}^0\bar{A}^0=\mathbf{0}$, FedRot-LoRA applies rotational alignment after the first communication round to avoid ill-conditioned rotations.

\begin{table}[h]
    \centering
        \caption{The hyperparameters for each experimental setting.}
    \label{tab:para}
\resizebox{1.0\linewidth}{!}{
\begin{tabular}{@{}cccllll@{}}
\toprule
Experiments                                                                             & Clients Num               & Rank                   & \multicolumn{1}{c}{Data distributions}             & \multicolumn{2}{l}{\begin{tabular}[c]{@{}l@{}}Optimal learning rates\\ (FedIT/FFA-LoRA/RoLRA/FedRot-LoRA)\end{tabular}}                                                               & \multicolumn{1}{c}{Optimal $\lambda$}                                           \\ \midrule
\multirow{5}{*}{GLUE results (Table \ref{tab:results})}                                                 & \multirow{5}{*}{$N=3,10$} & \multirow{5}{*}{$r=4$} & \multirow{5}{*}{Dirichlet $h=0.5$}            & SST-2                                  & \begin{tabular}[c]{@{}l@{}}$N=3: (0.02/0.02/0.0005/0.02)$\\ $N=10: (0.005/0.02/0.0005/0.005)$\end{tabular}                                 & \begin{tabular}[c]{@{}l@{}}$\lambda=0.6$\\ $\lambda=0.6$\end{tabular}               \\
                                                                                        &                           &                        &                                                    & QNLI                                   & \begin{tabular}[c]{@{}l@{}}$N=3: (0.02/0.02/0.001/0.005)$\\ $N=10: (0.02/0.02/0.0005/0.005)$\end{tabular}                                 & \begin{tabular}[c]{@{}l@{}}$\lambda=0.2$\\ $\lambda=0.2$\end{tabular}               \\
                                                                                        &                           &                        &                                                    & QQP                                    & \begin{tabular}[c]{@{}l@{}}$N=3: (0.005/0.02/0.001/0.005)$\\ $N=10: (0.005/0.02/0.0005/0.005)$\end{tabular}                                 & \begin{tabular}[c]{@{}l@{}}$\lambda=0.4$\\ $\lambda=0.6$\end{tabular}               \\
                                                                                        &                           &                        &                                                    & RTE                                    & \begin{tabular}[c]{@{}l@{}}$N=3: (0.005/0.02/0.0005/0.005)$\\ $N=10: (0.005/0.02/0.0005/0.005)$\end{tabular}                                 & \begin{tabular}[c]{@{}l@{}}$\lambda=0.2$\\ $\lambda=0.4$\end{tabular}               \\
                                                                                        &                           &                        &                                                    & MNLI                                   & \begin{tabular}[c]{@{}l@{}}$N=3: (0.02/0.02/0.001/0.005)$\\ $N=10: (0.005/0.02/0.001/0.005)$\end{tabular}                     & \begin{tabular}[c]{@{}l@{}}$\lambda=0.4$\\ $\lambda=0.4$\end{tabular}               \\ \midrule
\begin{tabular}[c]{@{}c@{}}MNLI \\ with different ranks (Table \ref{tab:ranks})\end{tabular}          & $N=3$                     & $r=4,8,16$             & Dirichlet $h=0.5$                             & \multicolumn{2}{l}{\begin{tabular}[c]{@{}l@{}}$r=4: (0.02/0.02/0.001/0.005)$\\ $r=8: (0.02/0.02/0.001/0.005)$\\ $r=16: (0.02/0.02/0.0005/0.005)$\end{tabular}}                   & \begin{tabular}[c]{@{}l@{}}$\lambda=0.4$\\ $\lambda=0.4$\\ $\lambda=0.4$\end{tabular} \\ \midrule
\begin{tabular}[c]{@{}c@{}}MNLI \\ with different non-IID level (Table \ref{tab:noniid})\end{tabular}  & $N=3$                     & $r=4$                  & Dirichlet $h=100,1,0.5$                       & \multicolumn{2}{l}{\begin{tabular}[c]{@{}l@{}}$h=100: (0.005/0.02/0.001/0.005)$\\ $h=1: (0.02/0.02/0.001/0.005)$\\ $h=0.5: (0.02/0.02/0.001/0.005)$\end{tabular}} & \begin{tabular}[c]{@{}l@{}}$\lambda=0.8$\\ $\lambda=0.8$\\ $\lambda=0.4$\end{tabular} \\ \midrule
\begin{tabular}[c]{@{}c@{}}HumanEval\\ (Trained on CodeSearchNet, Table \ref{tab:generation})\end{tabular} & $N=6$                     & $r=8$                  & Non-IID: distinct programming languages per client & \multicolumn{2}{l}{$(0.005/0.005/0.0005/0.005)$}                                                                                                                                       & $\lambda=0.4$                                                                     \\ \midrule
\begin{tabular}[c]{@{}c@{}}GSM8K\\ (Table \ref{tab:generation})\end{tabular}                               & $N=3$                     & $r=8$                  & IID                                                & \multicolumn{2}{l}{$(0.005/0.005/0.001/0.005)$}                                                                                                                                       & $\lambda=0.2$                                                                     \\ \bottomrule
\end{tabular}
}
\end{table}

\subsection{Additional Experimental Results}
\subsubsection{Rescaling vs Rotation}\label{app:exp-rescale}

\begin{table}[t]
\centering
\caption{Comparison of scalar rescaling and rotational alignment. Results show that rotational alignment substantially outperforms rescaling in higher-dimensional LoRA settings.
}
\label{tab:rescale}
\renewcommand{\arraystretch}{1.1} % Increases row height for readability
\begin{tabular}{@{}lccc@{}}
\toprule
\textbf{Methods}      & \textbf{SST-2}                     & \textbf{MNLI}   & \textbf{GSM8K}                       \\ \midrule
FedIT (No Alignment) &     $0.953\pm0.001$                      & $0.866\pm0.001$               & $0.4293\pm0.018$        \\
Scalar Rescaling &     $0.953\pm0.002$                      & $0.865\pm0.006$               & $0.4286\pm0.021$        \\
FedRot-LoRA (Rotation)  & $0.954\pm0.001$ & $0.876\pm0.002$   & $0.4437\pm0.009$   \\ \bottomrule
\end{tabular}
\end{table}

In Section~\ref{sec:methods}, we introduced a scalar optimization example (Figure \ref{fig:toyexample}) to illustrate the effect of aggregation misalignment in federated LoRA. This example is one-dimensional and serves to show that naive factor-wise aggregation can introduce instability near the optimal solution manifold.
In this example, we consider three clients with local optima satisfying $B_1A_1 = 0.5$, $B_2A_2 = 1.0$, and $B_3A_3 = 1.5$, so that the global optimum is $BA = 1.0$. We set the learning rate as $0.01$, and local steps as $30$ per communication round.
In this setting, FedIT (naive factor-wise averaging) suffers from aggregation misalignment: when the global iterate approaches the optimal manifold, aggregation noise causes the parameters to drift along the manifold, leading to misguided slide along the curve and slow convergence.
Factor-freezing methods avoid this issue by enforcing linear aggregation, but at the cost of reduced expressivity and slower optimization.

In one-dimension, the only admissible invariance of the factorization is \emph{scalar rescaling}.
To mitigate aggregation misalignment while allowing both factors to be trained, we consider aligning local updates via a scalar transformation.
Specifically, in round $t$, each client solves
\begin{align}
\min_{c_i^t}\;\|c_i^t A_i^t - A_{\mathrm{ref}}\|^2,
\quad \text{if } t \bmod 2 = 1 \label{eq:scalarOpt1} \\
\min_{c_i^t}\;\|c_i^t B_i^t - B_{\mathrm{ref}}\|^2,
\quad \text{if } t \bmod 2 = 0 \label{eq:scalarOpt2}
\end{align}
where $(A_{\mathrm{ref}}, B_{\mathrm{ref}})$ denote the aggregated global parameters from the previous round.
The above problem admits closed-form solutions,
\begin{align}
c_i^t = \frac{\langle A_i^t, A_{\mathrm{ref}} \rangle}{\|A_i^t\|_F^2}, \quad \text{if }t \bmod 2 = 1 \\
c_i^t = \frac{\langle B_i^t, B_{\mathrm{ref}} \rangle}{\|B_i^t\|_F^2}, \quad \text{if }t \bmod 2 = 0
\end{align}
The aligned factors are given by
\begin{align}
\tilde{A}_i^t &= c_i^t A_i^t,\quad
\tilde{B}_i^t = (c_i^t)^{-1} B_i^t, \quad \text{if }t \bmod 2 = 1 \\
\tilde{A}_i^t &= (c_i^t)^{-1} A_i^t,
\tilde{B}_i^t = c_i^t B_i^t. \qquad \text{if }t \bmod 2 = 0
\end{align}
In the scalar case, this rescaling exactly preserves the semantic update ($\tilde{B}_i^t \tilde{A}_i^t = B_i^t A_i^t$) and is sufficient to eliminate aggregation misalignment.

In higher-dimensional LoRA, $\Delta W = BA$ with
$B \in \mathbb{R}^{d \times r}$ and $A \in \mathbb{R}^{r \times d}$ for $r > 1$.
A natural generalization of scalar rescaling is to replace $c_i^t \in \mathbb{R}$ with a matrix $R_i^t \in \mathbb{R}^{r \times r}$ and consider transformations of the form
$B_i^t R_i^t (R_i^t)^{-1} A_i^t$.
To preserve semantic equivalence, $R_i^t$ must be invertible, which introduces two major difficulties.
First, enforcing invertibility requires nonconvex constraints (e.g., $\det(R_i^t) \neq 0$), and the resulting optimization no longer admits an efficient closed-form solution.
Second, unconstrained invertible matrices can be arbitrarily ill-conditioned, leading to numerical instability and highly imbalanced factors during training.
As a result, directly optimizing over general invertible transformations is computationally expensive, ill-conditioned, and unsuitable for federated settings.

An alternative is to retain scalar rescaling in high dimensions, effectively restricting $R_i^t = c_i^t I$.
While this remains well-defined, it only adjusts the global magnitude of the factors and cannot resolve \emph{subspace misalignment}:
rescaling does not change the column space of $B_i^t$ or the row space of $A_i^t$.
As the LoRA rank increases, client updates typically span misaligned latent subspaces, and scalar rescaling alone provides limited reduction in aggregation error.
FedRot-LoRA instead restricts $R_i^t$ to the orthogonal group,
enforcing $(R_i^t)^\top R_i^t = I$ and $\det(R_i^t) > 0$.
This choice ensures invertibility, preserves the semantic update, and yields a well-conditioned transformation with an efficient closed-form solution via the orthogonal Procrustes problem.
Orthogonal alignment provides $r(r-1)/2$ degrees of freedom,\footnote{An arbitrary $r\times r$ matrix has $r^2$ free parameters. The constraint $R^\top R$ imposes $\frac{r(r+1)}{2}$ independent constraints, therefore the remaining degrees of freedom are $\frac{r(r-1)}{2}$.} enabling direct alignment of latent subspaces rather than mere magnitude adjustment.
In contrast, scalar rescaling has only one degree of freedom and cannot correct subspace mismatch.

We compare scalar rescaling and FedRot-LoRA in high-dimensional settings on SST-2, MNLI (the same setting of Table \ref{tab:results}, $N=3$) and GSM8K (the same setting of Table \ref{tab:generation}).
For \textit{scalar rescaling}, we perform the same grid search over learning rates as stated in Appendix \ref{app:exp-settings}. Its optimal learning rates are $(0.02, 0.005, 0.005)$ for SST-2, MNLI, and GSM8K, respectively. 
As reported in Table~\ref{tab:rescale},
FedRot-LoRA achieves higher accuracy and greater stability across runs than \textit{scalar rescaling}, confirming that correcting subspace misalignment via rotation is essential in higher-dimensional LoRA.

\subsubsection{Effect of Reference Model Selection}\label{app:exp-ref}
In the main paper, we use the global model from the previous communication round as the reference model, as this choice has no additional communication overhead.
Here, we evaluate alternative reference selection strategies.
First, one may use an older global model as the reference (e.g., $W^{t-2}$), which can improve robustness to communication delays or failures. 
Second, one may use a client's local model as the reference. For this baseline, we randomly sample a client in each round and treat its post-training LoRA factors as the reference.
Table~\ref{tab:ref_model} compares these strategies.
All three choices yield competitive performance.
Using either a random client’s local model or the previous global model ($W^{t-1}$) consistently outperforms using an older global model ($W^{t-2}$).
The random client local model performs well because its update is recent, and aligning to a fresh subspace can be beneficial.
However, randomly switching the reference client across rounds may introduce instability, and identifying an ideal fixed client is non-trivial.
Using the previous global model ($W^{t-1}$) incurs no additional communication cost and performs well due to the smooth evolution of the LoRA subspace across adjacent rounds; we therefore adopt this strategy by default.

\begin{table}[t]
\centering
\caption{Effect of reference model selection.
}
\label{tab:ref_model}
\renewcommand{\arraystretch}{1.1} % Increases row height for readability
\begin{tabular}{@{}lcc@{}}
\toprule
\textbf{Reference Model}      & \textbf{SST-2}                                     & \textbf{MNLI}                        \\ \midrule
Random client local model &     $0.954\pm0.002$                      & $0.871\pm0.006$                      \\
Old global model ($W^{t-2}$) &      $0.951\pm0.001$                     & $0.866\pm0.002$                     \\
Previous global model ($W^{t-1}$)  & $0.954\pm0.001$ & $0.876\pm0.002$  \\ \bottomrule
\end{tabular}
\end{table}

\subsubsection{Comparison with Random Rotation}\label{app:exp-random}

To isolate whether the gains of FedRot-LoRA come from optimized subspace alignment rather than merely exploiting LoRA's rotational invariance, we include a \textit{Random Rotation} baseline.
In this variant, each client independently applies a Haar-uniform random rotation to its local LoRA factors prior to aggregation at every communication round.

For client $i$ at round $t$, we generate a random rotation matrix $R_i^t \in \mathrm{SO}(r)$ as follows. We first sample a matrix $Z \in \mathbb{R}^{r \times r}$ with i.i.d.\ entries $Z_{a,b} \sim \mathcal{N}(0,1)$ and compute its QR decomposition $Z = QP$, where $Q^\top Q = I$ and $P$ is upper triangular. To remove the sign ambiguity inherent in the QR factorization, we fix the column signs of $Q$ by enforcing a nonnegative diagonal of $P$. If $\det(Q) < 0$, we further flip the sign of one column to ensure $\det(Q) = +1$. The resulting matrix $R_i^t := Q$ is Haar-uniformly distributed over the special orthogonal group $\mathrm{SO}(r)$.
$R_i^t$ satisfies $(R_i^t)^\top R_i^t = I$ and $\det(R_i^t)=1$. $R_i^t$ is then applied to the post-training factors: 
\begin{align}
\tilde{B}_i^t=B_i^t R_i^t,\quad \tilde{A}_i^t=(R_i^t)^\top A_i^t.
\end{align}
In the random rotation baseline, each client independently samples a rotation matrix $R_i^t$ at every round using the procedure above.
In Table \ref{tab:random}, we compare FedRot-LoRA with this baseline on SST-2 and MNLI.
Despite preserving each client’s local update ($\tilde{B}_i^t\tilde{A}_i^t=B_i^t A_i^t$), random rotation leads to unstable training and often collapses performance, as the rotations may amplify cross-client subspace mismatch during aggregation.
Unlike FedRot-LoRA, this baseline uses no reference model and optimizes no Procrustes objective; consequently, it does not actively reduce alignment error across clients.
FedRot-LoRA explicitly mitigates aggregation misalignment, resulting in substantially more stable and effective training.

\begin{table}[t]
\centering
\caption{Comparison with random rotation. FedRot-LoRA outperforms random rotation on GLUE tasks.
}
\label{tab:random}
\renewcommand{\arraystretch}{1.1} % Increases row height for readability
\begin{tabular}{@{}lcc@{}}
\toprule
\textbf{Methods}      & \textbf{SST-2}                     & \textbf{MNLI}                        \\ \midrule
Random Rotation &     $0.541\pm0.182$                      & $0.318\pm 0.000$                      \\
FedRot-LoRA  & $0.954\pm0.001$ & $0.876\pm0.002$     \\ \bottomrule
\end{tabular}
\end{table}

\subsubsection{Additional Results on The Effect of Soft Rotation Level $\lambda$} \label{app:extra_lambda}
\begin{figure}[h]
    \centering
    \begin{subfigure}[b]{0.45\textwidth}
        \centering
        \includegraphics[width=\textwidth]{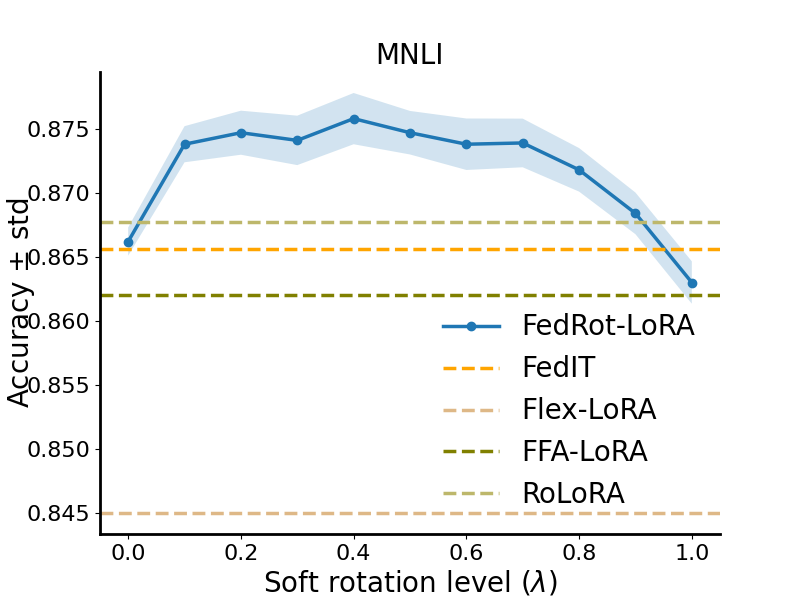}
        % \caption{Caption for subfigure A}
        \label{fig:labelA}
    \end{subfigure}%
    \hfill
    \begin{subfigure}[b]{0.45\textwidth}
        \centering
        \includegraphics[width=\textwidth]{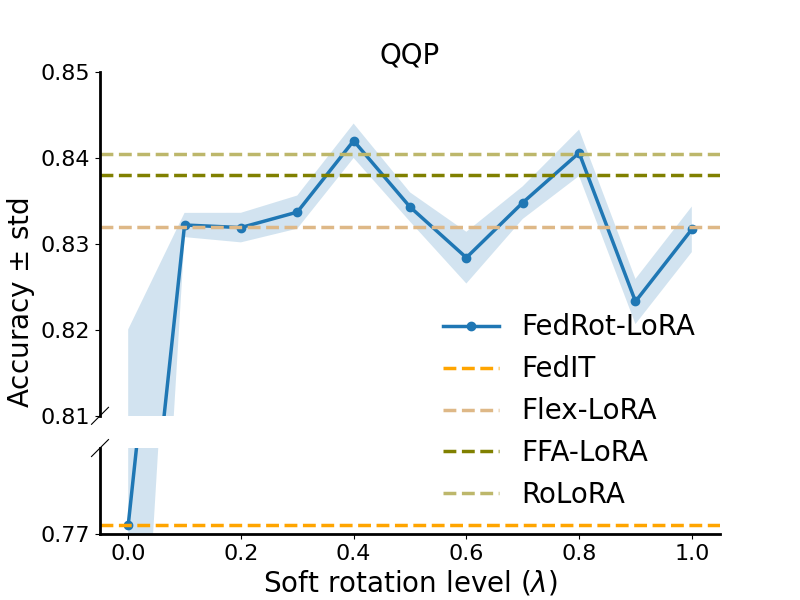}
        % \caption{Caption for subfigure B}
        \label{fig:labelB}
    \end{subfigure}
    \caption{Effect of soft rotation level $\lambda$ in FedRot-LoRA on MNLI and QQP. }
    \label{fig:main_label}
\end{figure}

\subsubsection{Empirical Diagnostics for the Alignment Assumptions}\label{app:assumptions}
\begin{figure}[h]
    \centering
    \includegraphics[width=0.75\linewidth]{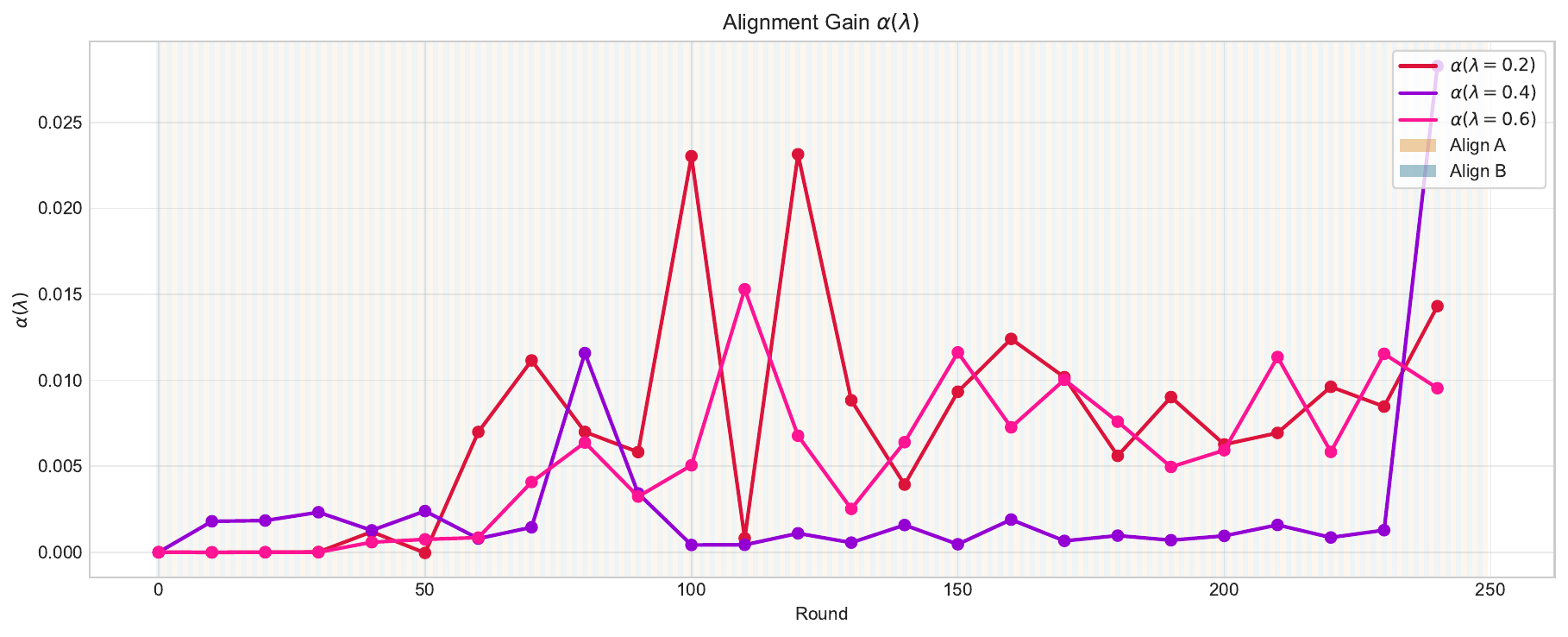}
\caption{Alignment gain $\alpha(\lambda)$ across training rounds. The minimum observed alignment gain is $\alpha(\lambda) \geq 0.00043$. The shaded background regions denote the active alignment target during each respective round. The experimental configuration utilizes the SST-2 dataset distributed across 3 clients, consistent with the settings detailed in Table 1.}
\label{fig:alignment_gain}
\end{figure}

\begin{figure}[h]
    \centering
    \includegraphics[width=0.5\linewidth]{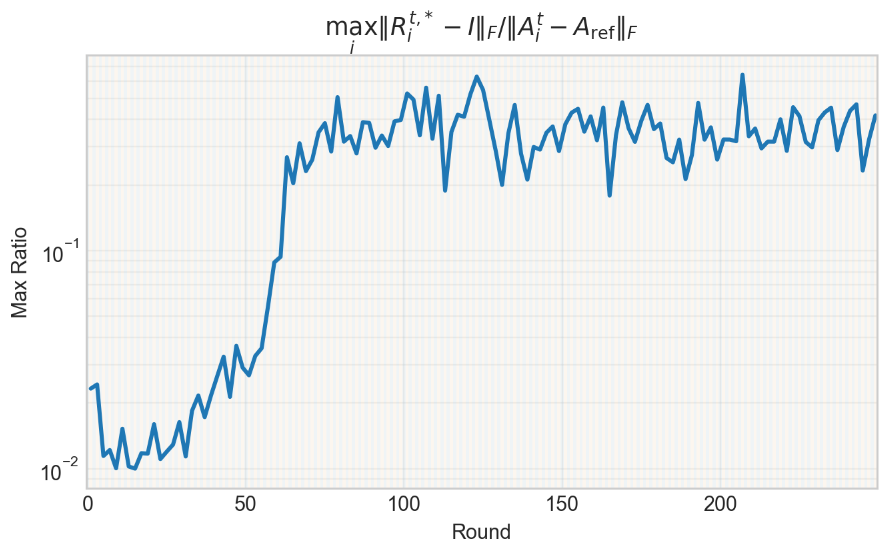}
    \caption{Evolution of the maximum relative alignment discrepancy, given by $\max_i \|R_i^{t,*} - I\|_F / \|A_i^t - A_{\mathrm{ref}}\|_F$, among all clients over the course of training rounds $t$. The y-axis is presented on a logarithmic scale to capture the magnitude shifts. The experimental configuration utilizes the SST-2 dataset distributed across 3 clients, consistent with the settings detailed in Table 1.}
\label{fig:rotation_discrepancy}
\end{figure}

\begin{figure}[h]
    \centering
    \includegraphics[width=0.5\linewidth]{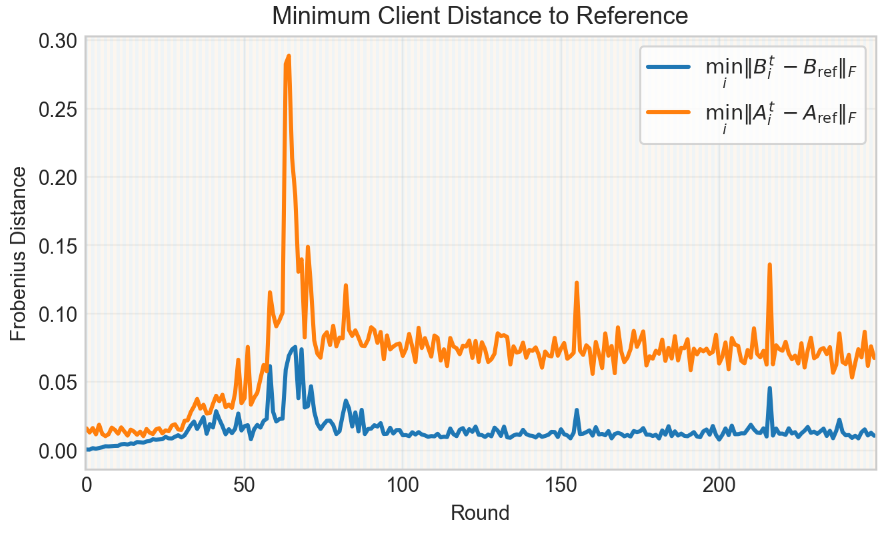}
\caption{Evolution of the minimum client distance to the global reference matrices, quantified by the Frobenius norms $\min_i \|A_i^t - A_{\mathrm{ref}}\|_F$ and $\min_i \|B_i^t - B_{\mathrm{ref}}\|_F$, among all clients at each training round $t$. The experimental configuration utilizes the SST-2 dataset distributed across 3 clients, consistent with the settings detailed in Table 1.}
\label{fig:client_distance}
\end{figure}

Figures~\ref{fig:alignment_gain}--\ref{fig:client_distance} provide empirical diagnostics for the alignment assumptions used in the analysis.
Figure~\ref{fig:alignment_gain} tracks the relative alignment gain $\alpha(\lambda)$ during training and shows that it remains positive for the tested values of $\lambda$, supporting the positive-gain condition in Assumption~\ref{assump:gain}.
Figure~\ref{fig:rotation_discrepancy} reports the maximum relative rotation discrepancy $\max_i \|R_i^{t,*}-I\|_F / \|A_i^t-A_{\mathrm{ref}}\|_F$, which remains bounded throughout training, consistent with the bounded-dispersion condition in Assumption~\ref{assump:dispersion}.
Figure~\ref{fig:client_distance} shows that the client factors maintain non-vanishing distance from the global reference, indicating persistent client drift under the non-IID partition and supporting Assumption~\ref{assump:noniid}.
Together, these diagnostics suggest that the assumptions are consistent with observed training behavior in heterogeneous federated settings, rather than relying on pathological or degenerate cases.

% \begin{figure}
%     \centering
%     \includegraphics[width=0.7\linewidth]{fig/mnli_best_lr_accuracy.pdf}
%     \caption{Validation accuracy comparison on the MNLI dataset using the optimal learning rate for each method. The dataset is distributed across 3 clients, consistent with the experimental settings specified in Table 1.}
%     \label{fig:placeholder}
% \end{figure}

%%%%%%%%%%%%%%%%%%%%%%%%%%%%%%%%%%%%%%%%%%%%%%%%%%%%%%%%%%%%%%%%%%%%%%%%%%%%%%%
%%%%%%%%%%%%%%%%%%%%%%%%%%%%%%%%%%%%%%%%%%%%%%%%%%%%%%%%%%%%%%%%%%%%%%%%%%%%%%%

\end{document}